\documentclass{article}



\usepackage[preprint]{neurips_2023}



\usepackage[utf8]{inputenc} 
\usepackage[T1]{fontenc}    
\usepackage{hyperref}       
\usepackage{url}            
\usepackage{booktabs}       
\usepackage{amsfonts}       
\usepackage{nicefrac}       
\usepackage{microtype}      
\usepackage{xcolor}         
\usepackage[linesnumbered,ruled]{algorithm2e}
\usepackage{amsmath}
\usepackage{amssymb}
\usepackage{graphicx}
\usepackage{amsthm}
\usepackage{dsfont}
\newtheorem{theorem}{Theorem}[section]

\newtheorem{claim}[theorem]{Claim}

\newtheorem{lemma}[theorem]{Lemma}

\newtheorem{definition}[theorem]{Definition}

\newtheorem{assumption}[theorem]{Assumption}

\usepackage{etoolbox}
\usepackage{subcaption}

\makeatother

\title{Byzantine-Robust Clustered Federated Learning}

%

\author{%
	Zhixu Tao \\
	ORFE, Princeton University\\
	Princeton, NJ 08544 \\
	\texttt{zhixu.tao@princeton.edu} 
	\And
	Kun Yang \\
	ORFE, Princeton University \\
	Princeton, NJ 08544 \\
	\texttt{ky8517@princeton.edu} 
	\And
	Sanjeev R. Kulkarni \\
	ECE\&ORFE, Princeton University\\
	Princeton, NJ 08544 \\
	\texttt{kulkarni@princeton.edu} 
}

\begin{document}

\maketitle

\begin{abstract}
 This paper focuses on the problem of adversarial attacks from Byzantine machines in a Federated Learning setting where non-Byzantine machines can be partitioned into disjoint clusters. In this setting, non-Byzantine machines in the same cluster have the same underlying data distribution, and different clusters of non-Byzantine machines have different learning tasks. Byzantine machines can adversarially attack any cluster and disturb the training process on clusters they attack. In the presence of Byzantine machines, the goal of our work is to identify cluster membership of non-Byzantine machines and optimize the models learned by each cluster. We adopt the Iterative Federated Clustering Algorithm (IFCA) framework of Ghosh et al. (2020) to alternatively estimate cluster membership and optimize models. In order to make this framework robust against adversarial attacks from Byzantine machines, we use coordinate-wise trimmed mean and coordinate-wise median aggregation methods used by Yin et al. (2018). Specifically, we propose a new Byzantine-Robust Iterative Federated Clustering Algorithm to improve on the results in Ghosh et al. (2019). We prove a convergence rate for this algorithm for strongly convex loss functions. We compare our convergence rate with the convergence rate of an existing algorithm, and we demonstrate the performance of our algorithm on simulated data. \footnote{Our code is adapted from \url{https://github.com/jichan3751/ifca}, and is available at \url{https://github.com/kun0906/brifca}.}
\end{abstract}

\section{Introduction}

 Federated Learning (FL) is a new privacy-preservation machine learning paradigm \cite{rodriguez2023survey, mcmahan2017federated}. Under a federated learning setting, worker machines collaboratively solve a machine learning task under the coordination of a central sever while all data remains decentralized at each worker machine \cite{kairouz2021advances}. The FL paradigm allows us to solve two challenges that traditional machine learning methods suffer from. The first one is the data privacy issue. In the traditional machine learning setting, training a model requires access to all data. However, some data might be personal and sensitive, for example, health data from hospitals. Federated Learning tackles this problem by not allowing the central server to have access to data held by machines. The second challenge is the growing amount of data sets and limited communication and storage capacity. In the traditional machine learning setting, the size of the training data is generally large in order to get a meaningful model. If the central sever uses data from all worker machines to solve a machine learning task, the communication between machines and central sever can be very expensive and inefficient. Moreover, storage at the server might also be an issue. Federated Learning solves this problem by machines limiting the amount of information sent to the central server. 

In Federated Learning, the existence of Byzantine machines makes robustness a huge concern. Byzantine machines can adversarially manipulate the training process and corrupt the training models. It has been shown that even a single Byzantine fault may be able to change the trained model significantly \cite{blanchard2017machine}. Therefore, it is important to design algorithms that are robust against adversarial attacks from Byzantine machines. This problem has been considered in a series of work \cite{blanchard2017machine, blanchard2017byzantine,yin2018byzantine, chen2017distributed}. 

Another concern in Federated Learning is the issue of data heterogeneity. In Federated Learning, worker machines often collect and generate data in a very non-identically distributed way across the network \cite{9084352}. For example, users from different countries use different languages, so in the next-word prediction task, machines are distributed in a highly non-identically distributed way. The data heterogeneity issue has been receiving more attention \cite{kulkarni2020survey, qu2022rethinking}. Some research work focuses on learning one global model from non-i.i.d data \cite{zhao2018federated, mohri2019agnostic, li2019rsa}. An alternative solution is to learn several distinct local models simultaneously \cite{smith2017federated, fallah2020personalized, jiang2019improving, chen2018federated}. In our work, we focus on a specific heterogeneous data setting in which worker machines can be partitioned into disjoint clusters, and each cluster learns a distinct model. 

In this work, we assume a subset of machines is Byzantine. We assume non-Byzantine machines, or normal machines, can be partitioned into $k$ disjoint clusters based on their data distributions. Normal machines in the same cluster draw data points i.i.d from the same distribution. Different clusters correspond to different data distributions and have different learning tasks. We design a robust algorithm to cluster normal machines into $k$ groups and at the same time optimize models for each group. Our algorithm adopts the Iterative Federated Clustering Algorithm (IFCA) framework \cite{ghosh2020efficient} and uses two robust aggregation methods, namely coordinate-wise median and coordinate-wise trimmed mean \cite{yin2018byzantine}. We give assumptions on the fraction of Byzantine machines based on the worst case scenario that all Byzantine machines can collectively attack the smallest cluster. We show that even in the presence of Byzantine machines, our algorithm converges for strongly convex loss functions. Also, we do not have the assumption that all normal machines have the same number of data points. We only impose an assumption on the minimum number of data points across all normal machines. We compare our result with the convergence result of an existing algorithm and show that our algorithm achieves better error rate.  

\section{Related work}
\textbf{Clustering in Federated Learning:} There is a line of work trying to solve non-i.i.d data issues in Federated Learning by clustering machines into different groups. For example, \cite{briggs2020federated} introduced a hierarchical clustering step to separate clusters of machines by the similarity of their local updates to the global joint model. \cite{sattler2020clustered} proposed a post-processing method that is performed after a Federated Learning algorithm converges to group clients based on geometric properties of the Federated Learning loss surface. \cite{li2021federated} proposed an iterative soft clustering algorithm which allows clients to be clustered into more than one cluster. \cite{long2023multi} introduced a multi-center aggregation method to cluster clients, which learns multiple global models as cluster centers and simultaneously computes the optimal matching between clients and centers. Our work adopts the framework from \cite{ghosh2020efficient}. The Iterative Federated Clustering Algorithm (IFCA) proposed by \cite{ghosh2020efficient} solves the problem wherein different groups of clients have their own leaning tasks. More specifically, they assume there are $m$ machines that can be clustered into $k$ disjoint clusters $S_1^*,\dots, S_k^*$. These $k$ clusters correspond to $k$ different underlying distributions $\mathcal{D}_1^*,\dots, \mathcal{D}_k^*$. In other words, if machine $i$ is from cluster $S_j^*$, it draws $n$ i.i.d data points from $\mathcal{D}_j^*$. In our work, we use a similar setting but with Byzantine machines. We formulate our problem in Section \ref{setup}. IFCA alternatively estimates the cluster identities of clients and optimizes model parameters for each cluster.  

\textbf{Robustness in Federated Learning:} In order to tackle adversarial attacks, there are several different directions to enhance the robustness of Federate Learning algorithms. The first method is robust aggregation. The most common way to aggregate information is by using FedAvg \cite{mcmahan2017communication} which takes the weighted average of estimated parameters sent by machines. However, FedAvg is susceptible to outliers and adversarial attacks. \cite{wu2020federated} and \cite{pillutla2019robust} proposed to aggregate by using the geometric mean. \cite{pillutla2022robust} used a multidimensional generalization of the median, known as geometric median or $L_1$ median to do aggregation. \cite{blanchard2017machine} proposed an aggregation method in a distributed learning setting based on combining majority-based and squared-distance-based methods together. In our work, we adopt the aggregation method from \cite{yin2018byzantine}. In \cite{yin2018byzantine}, they proposed two robust aggregation methods, namely coordinate-wise median and coordinate-wise trimmed mean. They showed that these two aggregation methods can achieve order-optimal error rates under some conditions. It is worth of noting that during the preparation of this paper, a recent work by \cite{zhu2023byzantine} proposes several new robust Federated Learning protocols which improve error rates from \cite{yin2018byzantine}. This recent work suggests a future direction which is discussed in Section \ref{limit}. Another method to tackle adversarial attacks is to detect all Byzantine machines and put them into a separate cluster that will not participate in the training process. For example, \cite{sattler2020clustered} assumed a subset of machines is Byzantine and adopted the IFCA framework to cluster Byzantine machines into a separate cluster. \cite{wang2022brief} proposed an algorithm called BRIEF which uses DBSCAN clustering algorithm to detect malicious machines and then implements model segmentation in each cluster to thwart attacks from malicious machines. 

\textbf{Robust Clustering in Federated Learning:} It is worth of mentioning that clustering machines in the presence of Byzantine machines has been getting more attention in Federated Learning. Specifically, \cite{ghosh2019robust} solves the exact same problem as our work. However, their algorithm is different from ours. In \cite{ghosh2019robust}, they designed a Three-Stage algorithm which consists of local empirical risk minimization, estimated parameters clustering and model optimization on each cluster. In other words, they solve the problem by first clustering and then optimizing. Our algorithm is essentially different since we do clustering and optimizing alternatively. We make thorough comparison in section \ref{theory}.

\section{Problem formulation}\label{setup}
In this section, we give a precise formulation of our problem. We consider the problem of minimizing some empirical loss functions in a Federated Learning setting. Suppose there are $m$ worker machines and one center machine. We assume that among these $m$ worker machines, an $\alpha$ fraction of them are Byzantine machines while the remaining $1-\alpha$ fraction of machines are normal. We denote $\mathcal{B}$ as the set of Byzantine machines. For those $(1-\alpha)m$ normal machines, we assume there are $k$ different underlying data distributions $\mathcal{D}_1^*,\dots, \mathcal{D}_k^*$, and these normal machines can be clustered into $k$ disjoint clusters $S_1^*,\dots, S_k^*$. If a normal machine $i$ is from cluster $S_j^*$, it draws $n_i$ i.i.d data $z^{i,1},\dots, z^{i,n_i}$ from distribution $\mathcal{D}_j^*$. We assume that the normal machines do not know their cluster identity. Normal machines communicate with the center machine according to some predefined protocols. Let $f(\theta; z):\Theta \rightarrow \mathbb{R}$ be the loss function associated with a data point $z$. We assume the parameter space $\Theta\subset \mathbb{R}^d$ is convex and compact with diameter $D$. We denote $n_{min}:= \min_{i\in [m]\setminus\mathcal{B}}n_i$ as the minimum number of data points across all normal machines and $N := \sum_{i\in[m]\setminus\mathcal{B}} n_i$ as the total number of data points over all normal machines. Let $F^j(\theta) :=\mathbb{E}_{z\sim \mathcal{D}_j^*}[f(\theta;z)]$ be the population loss function associated with distribution $\mathcal{D}_j^*$. Our goal is to find a set of estimators $\{\theta_j\}_{j=1}^k$ that is close to $\{\theta_j^*\}_{j=1}^k$ where $\theta_j^* = \arg\min_{\theta\in \Theta}F^j(\theta)$. In order to achieve our goal, we minimize the empirical loss function associated with the $i$-th normal machine $F_i(\theta) :=\frac{1}{n_i}\sum_{l=1}^{n_i}f(\theta;z^{i, l})$. Note that throughout the paper, we use $\|\cdot\|$ to denote $\ell_2$ norm. 

One example of our problem formulation is mean estimation with adversarial attacks. Suppose $\alpha m$ machines are Byzantine machines and the remaining machines are normal. If a normal worker machine $i$ is from the $j$-th cluster $S_j^*$, it draws $n_i$ i.i.d data from distribution $\mathcal{D}_j^*$ with mean $\theta_j^*$: $z^{i, 1},\dots, z^{i, n_i}\sim \mathcal{D}_j^*(\theta_j^*)$. The loss function we use is the standard square loss function $f(\theta;z) = \|\theta-z\|^2$. It's easy to verify that $\theta_j^*$ is the minimizer of the population loss function $F^j(\theta) = \mathbb{E}_{z\sim \mathcal{D}_j^*}[\|z-\theta\|^2]$ for the $j$-th cluster. By minimizing the empirical loss function $F_i(\theta) = \frac{1}{n_i}\sum_{l=1}^{n_i}\|z^{i,l}-\theta\|^2$ associated with the $i$-th normal worker machine, we are using the sample mean to estimate the population mean. 

Another example in a supervised learning setting is the case of linear models with square loss \cite{ghosh2020efficient}. We assume for a normal machine $i\in S_j^*$, it has $n_i$ feature-response pairs $z^{i, l} = (x^{i,l}, y^{i,l})$ and each pair satisfies $y^{i,l} = \langle x^{i,l},\theta_j^*\rangle+\epsilon^{i, l}$ where $x^{i, l}\sim\mathcal{N}(0, I_d)$ and $\epsilon^{i, l}\sim \mathcal{N}(0, \sigma^2)$ is additive noise that is independent of $x^{i, l}$. The square loss function is $f(\theta; z) = (y - \langle x,\theta\rangle)^2$. It's clear that $\theta_j^*$ is the minimizer of the population loss function $F^j(\cdot)$.

\section{Algorithm}\label{algorithm}
In this section, we present our algorithm and related definitions. Our algorithm \textit{Byzantine-Robust Iterative Federated Clustering Algorithm} mainly adopts the scheme of \textit{Iterative Federated Clustering Algorithm} (IFCA) \cite{ghosh2020efficient}, alternatively minimizing loss functions and estimating cluster identities. At the same time, our algorithm uses two robust aggregation methods, coordinate-wise median and coordinate-wise trimmed mean, from \cite{yin2018byzantine} to tackle the problem of Byzantine machines. This algorithm starts with $k$ initial estimated parameters $\{\theta_j^{(0)}\}_{j=1}^k$. At the $t$-th iteration, the center machine sends the current estimated parameters $\{\theta_j^{(t)}\}_{j=1}^k$ to all worker machines. If the $i$-th worker machine is a normal machine, it estimates its cluster identity by computing $\hat{j}_i = \arg\min_{j\in [k]}F_i(\theta_j^{(t)})$, and then it computes the gradient $g_i = \nabla F_i(\theta_{\hat{j}_i}^{(t)})$ of the empirical loss function at $\theta_{\hat{j}_i}^{(t)}$. Then the normal worker machine sends its current estimate $\hat{j}_i$ and gradient $g_i$ to the center machine. If the $i$-th machine is a Byzantine machine, it can adversarially pick a $\hat{j}_i\in [k]$ as its current estimate for cluster identity, and sends this $\hat{j}_i$ and an arbitrary $d$ dimensional vector to the center machine. After all worker machines send required information to the center machine, the center machine groups worker machines with the same cluster identity estimates into one cluster. Then the center machine has two options to aggregate gradients within one cluster. The first option is by taking the coordinate-wise median and the second option is by taking the coordinate-wise $\beta$-trimmed mean of the clients' gradients. Then the center machine updates estimated parameters for each cluster by using gradient descent. The Euclidean projection $\Pi_{\Theta}$ guarantees that the updated estimates $\{\theta_j^{(t+1)}\}_{j=1}^k$ still stay in the parameter space $\Theta$. Here, the definition of coordinate-wise median and coordinate-wise trimmed mean we use is the same as \cite{yin2018byzantine}.
\begin{definition}\label{def.1}(Coordinate-wise median)
	For a set of vectors $\{\bold{v}_i\}_{i=1}^m\subset \mathbb{R}^d$, the coordinate-wise median of these vectors is defined as a $d$ dimensional vector $\bold{v}:=\operatorname{med}\{\bold{v}_i:i\in [m]\}$ whose $h$-th coordinate is given by the usual one dimensional median $\bold{v}^h:=\operatorname{med}\{\bold{v}_i^h:i\in [m]\}\,\forall h\in [d]$.
\end{definition}
\begin{definition}\label{def.2}(Coordinate-wise trimmed mean)
	For a set of vectors $\{\bold{v}_i\}_{i=1}^m\subset\mathbb{R}^d$ and $\beta\in [0, \frac{1}{2})$, the coordinate-wise $\beta$-trimmed mean is defined as a $d$ dimensional vector $\bold{v}:=\operatorname{trmean}_{\beta}\{\bold{v}_i:i\in [m]\}$ whose $h$-th coordinate is given by $\bold{v}^h = \frac{1}{(1-2\beta)m}\sum_{v\in U_h}v$ where $U_h\subset \{\bold{v}_i^h\}_{i=1}^m$ is obtained by removing the largest and smallest $\beta$ fraction of $\{\bold{v}_i^h\}_{i=1}^m$.
\end{definition}
\begin{algorithm}
	\SetKwInOut{Input}{Input}
	\SetKwInOut{Output}{Output}
	\SetKwFunction{Initialization}{Robust Initialization}
	
	\Input{number of iterations $T$, number of clusters $k$, step size $\gamma$}
	\Output{a set of estimates $\{\theta_j^{(T)}\}_{j=1}^k$}
	
	Initialize $ \theta_1^{(0)},\dots, \theta_k^{(0)}$ \\
	\For{$t = 0, \dots, T-1$}{center machine broadcasts $\{\theta_j^{(t)}\}_{j=1}^k$ to all worker machines\\
		\For{$i$-th worker machine}{compute \begin{equation}
				\hat{j}_i^{(t)} =
				\begin{cases}
					\arg\min_{j\in [k]}F_i(\theta_j^{(t)}) & \text{if $i$ is a normal machine}\\
					\text{adversarially pick one cluster}&\text{if $i$ is a Byzantine machine}
				\end{cases}       
			\end{equation}
			and \begin{equation}
				g_i^{(t)} =
				\begin{cases}
					\nabla F_i(\theta_{\hat{j}_i}^{(t)}) & \text{if $i$ is a normal machine}\\
					* & \text{if $i$ is a Byzantine machine}
				\end{cases}       
			\end{equation}\\
			send back $\hat{j}_i^{(t)}$ and $g_i^{(t)}$ to the center machine}
		center machine aggregates gradients by:\\
		\textbf{Option I (median)}: $g(\theta_j^{(t)}) = \operatorname{med}\{g_i^{(t)}: \hat{j}_i = j\}\forall j\in [k]$\\
		\textbf{Option II (trimmed mean)}: $g(\theta_j^{(t)}) = \operatorname{trmean}_{\beta}\{ g_i^{(t)}: \hat{j}_i = j	\}\forall j\in [k]$\\
		center machine updates estimated parameters by $\theta_j^{(t+1)} = \Pi_{\Theta}(\theta_j^{(t)} - \gamma g(\theta_j^{(t)}))$
	}
	\Return $\theta_1^{(T)}, \dots, \theta_k^{(T)}$
	\caption{Byzantine-Robust Iterative Federated Clustering Algorithm (Byzantine-Robust IFCA)}
	\label{alg_1}
\end{algorithm}

\section{Theoretical guarantees}
\label{theory}
In this section, we present convergence theorems for the Byzantine-Robust IFCA algorithm presented in Section \ref{algorithm} and summarized in Algorithm \ref{alg_1}. In \cite{ghosh2020efficient}, they used a re-sampling technique to remove inter-dependence between the cluster estimation and the gradient computation. Specifically, if we run $T$ parallel iterations, we partition the $n_i$ data points on the $i$-th machine into $2T$ disjoint subsets so that each subset only contains $n_i' = \frac{n_i}{2T}$ data points. We use the same technique. However, since $n_i'$ and $n_i$ are of the same scale, for simplicity, we will just use $n_i$ to prove theoretical results. We first introduce some definitions and assumptions we need for the convergence theorem. 
\begin{definition}(Strong convexity)
	A differentiable function $f: \mathbb{R}^d\rightarrow\mathbb{R}$ is $\lambda$-strongly convex if $f(y)\ge f(x)+\nabla f(x)^T(y-x)+\frac{\lambda}{2}\|y-x\|^2 \forall x, y\in \mathbb{R}^d$.
\end{definition}
\begin{definition}($L$-smoothness)
	A differentiable function $f: \mathbb{R}^d\rightarrow\mathbb{R}$ is $L$-smooth if $\|\nabla f(x)-\nabla f(y)\|\le L\|x-y\|\forall x, y\in \mathbb{R}^d$.
\end{definition}
\begin{definition}(Lipschitz continuity)
	A function $f:\mathbb{R}^d\rightarrow\mathbb{R}$ is Lipschitz continuous if there exists a constant $K\ge 0$ such that $\|f(x)-f(y)\|\le K\|x-y\|\,\forall x,y\in \mathbb{R}^d$.
\end{definition}
\begin{definition}(Absolute skewness)\cite{yin2018byzantine}
	For a one-dimensional random variable $X$, its absolute skewness is defined as $\gamma(X):=\frac{\mathbb{E}[|X-\mathbb{E}[X]|^3]}{\operatorname{Var}(X)^{3/2}}$. For a $d$-dimensional random vector $\bold{X} = [X_1\, X_2\,\dots\,X_d]^T$, its absolute skewness is defined coordinate-wise by $\gamma(\bold{X}) :=[\gamma(X_1) \,\gamma(X_2)\,\dots\,\gamma(X_d)]^T$.
\end{definition}
First, we impose convexity and smoothness assumptions on the loss functions $f$ and $F^j$. 
\begin{assumption}\label{assumption.1}
	The population loss function $F^j(\cdot)$ is $\lambda_F$-strongly convex and $L_F$-smooth $\forall j\in [k]$. 
\end{assumption}
\begin{assumption}\label{assumption.4}(Smoothness of $f$)
	$\forall j\in[k], \,\forall h\in [d]$, $\partial_hf(\cdot;z)$ is $L_h$-Lipschitz and $f(\cdot; z)$ is $L$-smooth where $z\sim \mathcal{D}_j^*$. 
\end{assumption}
Since we take the coordinate-wise median and the trimmed mean of gradient, we need to define $L_f: = (\sum_{h=1}^d L_h^2)^{\frac{1}{2}}$ as the sum of Lipschitz constants of $\partial_h f$. Since we use loss function values to determine cluster identities of normal machines, we also need the following distributional assumptions on $f(\theta;z)$. 
\begin{assumption}(Bounded variance)\label{assumption.2}
	$\forall j\in [k], \,\forall \theta\in \Theta$, the variance of $f(\theta;z)$ is bounded where $z\sim \mathcal{D}_j^*$, i.e., $\exists\,\eta^2$ such that $\mathbb{E}_{z\sim \mathcal{D}_j^*}[(f(\theta;z) - F^j(\theta))^2]\le \eta^2$.
\end{assumption}
In order to use the IFCA framework, we also need the following notations and technical assumptions from \cite{ghosh2020efficient}. We define $\Delta :=\min_{j\ne j'}\|\theta_j^* - \theta_{j'}^*\|$ as the minimum separation of parameters, define $p_j := \frac{|S_j^*|}{m}$ as the fraction of $j$-th cluster, and define $p:=\min_{j\in [k]}p_j$ as the minimum fraction. We use $x\gtrsim y$ to denote $x\ge Cy$ for some sufficiently large constant $C$. 
\begin{assumption}\label{assumption.5}
	Without loss of generality, we assume $\forall j\in [k], \,\|\theta_j^*\|\le 1$. Also, we assume $\forall j\in [k], \, \|\theta_j^{(0)}-\theta_j^*\|\le\frac{1}{4}\sqrt{\frac{\lambda_F}{L_F}}\Delta$, $n_{min}\gtrsim \frac{k\eta^2}{\lambda_F^2\Delta^4}$, $p\gtrsim\frac{\log(N)}{m}$ and $\Delta \ge \tilde{\mathcal{O}}(\max\{n_{min}^{-1/5}, m^{-1/6}n_{min}^{-1/3}\})$.
\end{assumption}
In other words, we require that the initialization has to be good enough, each normal machine has enough data points, and each cluster has normal machines. The assumption on $\Delta$ ensures that at each iteration $t$, $\|\theta_j^{(t)}-\theta_j^*\|$ is small enough $\forall j\in [k]$.
\subsection{Coordinate-wise median analysis}
For coordinate-wise median analysis, we need the following assumptions on the variance of the gradient and absolute skewness. As noted in \cite{yin2018byzantine}, these two assumptions are satisfied in many learning problems. 
\begin{assumption}(Bounded variance of gradient)\label{assumption.7}
	$\forall j\in [k], \,\forall \theta\in \Theta$, the variance of $\nabla f(\theta;z)$ is bounded where $z\sim \mathcal{D}_j^*$, i.e., $\exists\, \nu^2$ such that $\mathbb{E}_{z\sim\mathcal{D}_j^*}[\|\nabla f(\theta;z) - \nabla F^j(\theta)\|^2]\le \nu^2$.
\end{assumption}
\begin{assumption}\label{assumption.3}(Bounded absolute skewness)
	$\forall j\in [k], \,\forall \theta\in \Theta$, $\exists S>0$ such that $\|\gamma(\nabla f(\theta;z))\|_{\infty}\le S$ where $z\sim \mathcal{D}_j^*$.
\end{assumption}
Next, we present the convergence guarantee for the algorithm with Option I, coordinate-wise median aggregation. 
\begin{theorem}\label{thm.1}
	Consider Option I in Algorithm \ref{alg_1}. Suppose assumptions \ref{assumption.1}, \ref{assumption.4}, \ref{assumption.2}, \ref{assumption.5}, \ref{assumption.7} and \ref{assumption.3}all hold true. Suppose the fraction $\alpha$ of Byzantine machines satisfies 
	\begin{align}
		\frac{4c_1\eta^2}{p\delta\lambda^2_F\Delta^4 n_{min}}+\frac{4\alpha}{p}+\sqrt{\frac{d\log(1+NL_fD)}{m(\frac{1}{4}p-\frac{c_1\eta^2}{\delta\lambda_F^2\Delta^4n_{min}}-\alpha)}}+0.4748\frac{S}{\sqrt{n_{min}}}\le \frac{1}{2}-\epsilon
	\end{align}
	for some constants $\epsilon, \delta >0$ and a specific constant $c_1$. Let the step size $\gamma = \frac{1}{L_F}$. With probability at least $1-\delta-\frac{1}{\operatorname{poly}(N)} - \frac{4d}{(1+\frac{1}{4}pmn_{min}L_fD)^d}$, after $T$ parallel iterations, we have $\forall j\in [k]$,
	\begin{align}
		\|\theta_j^{(T)} - \theta_j^*\|\le (1-\frac{\lambda_F}{\lambda_F+L_F})^T\|\theta_j^{(0)} - \theta_j^*\|+\frac{2}{\lambda_F}\mathcal{O}(C_{\epsilon}\nu(\frac{S}{n_{min}}+\frac{\alpha}{\sqrt{n_{min}}}+\sqrt{\frac{d\log(NL_fD)}{n_{min}mp}})).
	\end{align}
	In particular, $C_{\epsilon} = \sqrt{2\pi}\exp(\frac{1}{2}(\Phi^{-1}(1-\epsilon))^2)$ where $\Phi^{-1}$ is the inverse of the cumulative distribution function of the standard Gaussian distribution. 
\end{theorem}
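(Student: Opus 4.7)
The plan is to follow the alternating-optimization logic of the IFCA framework and separate the per-iteration error into a clustering error and a gradient-aggregation error. Concretely, at a given iteration $t$ with iterates $\{\theta_j^{(t)}\}$, I would (i) show that with high probability almost all normal machines correctly identify their true cluster by comparing loss values $F_i(\theta_j^{(t)})$; (ii) bound the effective fraction of corrupted entries in each reported cluster; (iii) apply the coordinate-wise median concentration of Yin et al.\ (2018) to bound $\|g(\theta_j^{(t)}) - \nabla F^j(\theta_j^{(t)})\|$ uniformly over $\Theta$; and (iv) plug this inexact-gradient bound into the standard contraction inequality for projected gradient descent on a $\lambda_F$-strongly convex, $L_F$-smooth objective with step $\gamma = 1/L_F$.

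For the clustering step, I would observe that for a normal machine $i\in S_{j^\star}^*$, $\hat{j}_i^{(t)}=j^\star$ is implied by $F_i(\theta_{j^\star}^{(t)}) < F_i(\theta_j^{(t)})$ for every $j\neq j^\star$. Writing $F_i = F^{j^\star} + (F_i - F^{j^\star})$ and using strong convexity of $F^{j^\star}$ together with the separation $\Delta$ and the initialization/iterate bound $\|\theta_j^{(t)}-\theta_j^*\|\le \tfrac{1}{4}\sqrt{\lambda_F/L_F}\,\Delta$ from Assumption \ref{assumption.5}, the deterministic part of the comparison has a gap of order $\lambda_F \Delta^2$. Bernstein/Hoeffding-type concentration of $F_i-F^{j^\star}$ using Assumption \ref{assumption.2}, combined with a covering-number argument over $\Theta$ (which produces the $d\log(1+NL_fD)$ factor), then yields that the misclassification fraction in each cluster is at most $c_1\eta^2/(\delta\lambda_F^2\Delta^4 n_{min})$ plus a sampling deviation of order $\sqrt{d\log(1+NL_fD)/m}$, with probability at least $1-\delta - 1/\operatorname{poly}(N)$. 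This is exactly where the assumed lower bounds on $n_{min}$ and $\Delta$ are consumed and where I expect the main technical difficulty to lie.

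Consequently, among the machines reporting $\hat{j}_i=j$, the fraction that are either Byzantine or misclassified normals is bounded above by the left-hand side of the theorem's budget inequality after normalization by $p$ (so the $4/p$ prefactors appear), and the hypothesis guarantees this fraction is $\le \tfrac{1}{2}-\epsilon$, which is the regime in which coordinate-wise median aggregation is robust. Applying the median concentration theorem from Yin et al.\ (2018) under Assumptions \ref{assumption.7} and \ref{assumption.3}, together with a second $\Theta$-covering (responsible for the probability deduction $4d/(1+\tfrac{1}{4}pmn_{min}L_fD)^d$), gives uniformly in $\theta_j^{(t)}$
\begin{equation}
\|g(\theta_j^{(t)}) - \nabla F^j(\theta_j^{(t)})\| \;\le\; \mathcal{O}\!\left( C_\epsilon \nu \left(\frac{S}{n_{min}} + \frac{\alpha}{\sqrt{n_{min}}} + \sqrt{\frac{d\log(NL_fD)}{n_{min}\,m\,p}}\right)\right),
\end{equation}
where the factor $C_\epsilon = \sqrt{2\pi}\exp(\tfrac12 (\Phi^{-1}(1-\epsilon))^2)$ arises from inverting the Gaussian CDF at the safety margin $\epsilon$.

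Finally, I would invoke the classical descent lemma for strongly convex $L_F$-smooth $F^j$ with step size $\gamma=1/L_F$, which under an inexact gradient gives
\begin{equation}
\|\theta_j^{(t+1)} - \theta_j^*\| \;\le\; \left(1 - \tfrac{\lambda_F}{\lambda_F+L_F}\right)\|\theta_j^{(t)} - \theta_j^*\| \;+\; \tfrac{1}{L_F}\,\|g(\theta_j^{(t)}) - \nabla F^j(\theta_j^{(t)})\|;
\end{equation}
Euclidean projection $\Pi_\Theta$ only contracts distances to $\theta_j^*\in\Theta$, so it is harmless. Unrolling over $T$ iterations, summing the resulting geometric series (which converts the $1/L_F$ coefficient into the stated $2/\lambda_F$), and taking a union bound across iterations (justified by the resampling trick described before the theorem, so that the clustering and aggregation analyses are independent of the current iterate) produces the advertised bound. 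The chief obstacle is step (i): obtaining a uniform-in-$\theta$ control of $F_i - F^{j^\star}$ that is tight enough to match the separation $\lambda_F \Delta^2$, which is precisely why the assumption $\Delta \gtrsim \max\{n_{min}^{-1/5}, m^{-1/6}n_{min}^{-1/3}\}$ is imposed and why the delicate constants and covering-based probability deductions appear in the final statement.
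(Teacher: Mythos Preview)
Your high-level plan matches the paper's proof exactly: bound the per-step misclassification, convert this into an effective corruption fraction per cluster, invoke the coordinate-wise median concentration of Yin et al., and plug into the strongly-convex descent inequality with projection. Two details in your execution differ from the paper and are worth correcting. First, Assumption \ref{assumption.2} gives only a variance bound on $f(\theta;z)$, so the paper controls $\mathbb{P}(E_i^{j,j'})$ by \emph{Chebyshev} (yielding the $\eta^2/(\lambda_F^2\Delta^4 n_{min})$ rate) and then turns this into a bound on $|S_j^{(t)}\cap\overline{S_j^*}|$ via \emph{Markov} (which is where the $1/\delta$ appears); Bernstein/Hoeffding would require moment assumptions you do not have. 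Second, there is \emph{no} covering argument in the clustering step: thanks to the resampling trick, $\theta_j^{(t)}$ is independent of the data used at iteration $t$, so the loss comparison is pointwise, and the misclassification fraction is simply $c_1\eta^2/(\delta\lambda_F^2\Delta^4 n_{min})$ with no additive $\sqrt{d\log(\cdot)/m}$ term. The $\sqrt{d\log(1+NL_fD)/(\cdot)}$ and $0.4748\,S/\sqrt{n_{min}}$ pieces in the theorem's budget inequality come entirely from the hypothesis of the median-concentration claim (the paper's adaptation of Yin et al.\ to heterogeneous $n_i$), not from clustering; likewise the probability deduction $4d/(1+\tfrac14 pmn_{min}L_fD)^d$ arises from that single covering, not a second one.
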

We prove this theorem in Appendix \ref{app.1}. We denote $\omega = \mathcal{O}(C_{\epsilon}\nu(\frac{S}{n_{min}}+\frac{\alpha}{\sqrt{n_{min}}}+\sqrt{\frac{d\log(NL_fD)}{n_{min}mp}}))$. If we run the algorithm for $T\ge \frac{L_F+\lambda_F}{\lambda_F}\log(\frac{\lambda_F}{2\omega}\max_{j\in [k]}\|\theta_j^{(0)}-\theta_j^*\|)$ parallel iterations, by using the inequality $\log(1-x)\le -x$, we get $\|\theta_j^{(T)} - \theta_j^*\|\le\frac{4\omega}{\lambda_F}$. Here we achieve an error rate $\tilde{\mathcal{O}}(\frac{\alpha}{\sqrt{n_{min}}}+\frac{\sqrt{d}}{\sqrt{pmn_{min}}}+\frac{1}{n_{min}})$. It is worth noting that even though the dependence on the dimension in this error rate is $\sqrt{d}$, the upper bound $\nu$ for $\operatorname{Var}(\nabla f(\theta;z))$ also depends on $d$. This will introduce extra dependence on $d$. We will further discuss this limitation and potential solutions in Section \ref{limit}. The assumption on $\alpha$ is based on the worst case scenario that we allow all Byzantine machines to collectively attack the smallest cluster. 
\subsection{Coordinate-wise trimmed mean analysis}
Next, we present the convergence theorem for the algorithm with Option II, trimmed mean aggregation. We need the following sub-exponential property for the coordinate-wise trimmed mean. Compared with the bounded absolute skewness from assumption \ref{assumption.3} for the coordinate-wise median, the sub-exponential property is stronger since it requires all the moments of the partial derivatives are bounded. 
\begin{assumption}($\sigma$-sub-exponential)\label{assumption.6}
	$\forall h\in [d]$ and $\forall \theta\in \Theta$, $\partial_h f(\theta;z)$ is $\sigma$-sub-exponential.
\end{assumption}
\begin{theorem}\label{thm.2}
	Consider Option II in Algorithm \ref{alg_1}. Suppose assumptions \ref{assumption.1}, \ref{assumption.4},  \ref{assumption.2}, \ref{assumption.5} and \ref{assumption.6} all hold true. Suppose the fraction $\alpha$ of Byzantine machines satisfies 
	\begin{align}\label{ineq.4}
		\frac{4c_1\eta^2}{p\delta\lambda_F^2\Delta^4n_{min}}+\frac{4\alpha}{p}\le\beta \le \frac{1}{2}-\epsilon
	\end{align}
	for some constants $\epsilon, \delta >0$. Choose step size $\gamma = \frac{1}{L_F}$. With probability at least $1-\delta-\frac{1}{\operatorname{poly}(N)}-\frac{4d}{(1+\frac{1}{4}pmn_{min}L_fD)^d}$, after $T$ parallel iterations, we have $\forall j\in [k]$, 
	\begin{align}
		\|\theta_j^{(T)} - \theta_j^*\|\le (1-\frac{\lambda_F}{\lambda_F+L_F})^T\|\theta_j^{(0)} - \theta_j^*\|+\frac{2}{\lambda_F}\mathcal{O}(\frac{\sigma d}{\epsilon}(\frac{\beta}{\sqrt{n_{min}}}+\frac{1}{\sqrt{pmn_{min}}})\sqrt{\log(NL_fD)}).
	\end{align}
\end{theorem}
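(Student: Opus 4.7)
The plan is to follow the same skeleton as the proof of Theorem \ref{thm.1}, replacing the coordinate-wise median concentration bound (which used Assumption \ref{assumption.3}) by the trimmed-mean concentration bound under the $\sigma$-sub-exponential Assumption \ref{assumption.6}. Concretely, I would decompose the update analysis into (i) a per-iteration gradient-approximation bound showing that the trimmed-mean aggregate $g(\theta_j^{(t)})$ is close to the true population gradient $\nabla F^j(\theta_j^{(t)})$ uniformly over $\theta \in \Theta$, and (ii) a standard contraction for projected gradient descent on the $\lambda_F$-strongly convex, $L_F$-smooth $F^j$. The two pieces are then combined by unrolling the recursion.

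First I would control the fraction of "effective Byzantines" that enter the aggregation step for each cluster. A normal machine $i \in S_{j^\ast}^\ast$ is misclassified only if $F_i(\theta_j^{(t)}) < F_i(\theta_{j^\ast}^{(t)})$ for some $j \neq j^\ast$. By Assumption \ref{assumption.5} (closeness of $\theta_j^{(t)}$ to $\theta_j^\ast$), Assumption \ref{assumption.1} (strong convexity of $F^{j^\ast}$), and the separation $\Delta$, misclassification forces $|F_i(\theta) - F^{j^\ast}(\theta)| \gtrsim \lambda_F \Delta^2$ at one of the two candidates. Chebyshev with Assumption \ref{assumption.2} then bounds the misclassification probability per machine by $O(\eta^2/(n_{min}\lambda_F^2 \Delta^4))$; Markov over the $|S_{j^\ast}^\ast| \ge pm$ machines gives, with probability at least $1-\delta$, a misclassified fraction at most $\tfrac{c_1 \eta^2}{p\delta \lambda_F^2 \Delta^4 n_{min}}$ inside any cluster. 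Because the $\alpha m$ true Byzantines may all pile into a single cluster of size at least $pm$, the total fraction of corrupted gradients within any cluster's aggregation is at most $\tfrac{4c_1\eta^2}{p\delta\lambda_F^2\Delta^4 n_{min}} + \tfrac{4\alpha}{p}$, which by hypothesis \eqref{ineq.4} is at most $\beta$. This guarantees that the $\beta$-trimmed mean removes, coordinate-wise, every corrupted entry.

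Next I would invoke the coordinate-wise trimmed-mean concentration bound of Yin et al. (2018) applied to the correctly-clustered normal machines in cluster $j$. Under Assumption \ref{assumption.6}, each $\partial_h f(\theta;z)$ is $\sigma$-sub-exponential, so the Bernstein-type bound for $\beta$-trimmed means yields, at a fixed $\theta$, a coordinate-wise deviation of order $\tfrac{\sigma}{\epsilon}\bigl(\tfrac{\beta}{\sqrt{n_{min}}} + \tfrac{1}{\sqrt{pm n_{min}}}\bigr)\sqrt{\log(1/\delta')}$ with probability $1-\delta'$. To upgrade to a uniform bound over $\Theta$, I would cover $\Theta$ by an $\epsilon_{net}$-net of size $(1+D/\epsilon_{net})^d$, choose $\epsilon_{net} \asymp 1/(pmn_{min}L_f)$, and use the $L_h$-Lipschitzness of $\partial_h f$ from Assumption \ref{assumption.4} to transfer the bound off the net. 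Union-bounding over the $d$ coordinates and the net produces the $\tfrac{4d}{(1+\tfrac{1}{4}pmn_{min}L_fD)^d}$ failure probability and introduces the $\sqrt{\log(NL_fD)}$ factor, giving $\|g(\theta_j^{(t)}) - \nabla F^j(\theta_j^{(t)})\| \lesssim \tfrac{\sigma d}{\epsilon}\bigl(\tfrac{\beta}{\sqrt{n_{min}}} + \tfrac{1}{\sqrt{pm n_{min}}}\bigr)\sqrt{\log(NL_fD)}$ uniformly.

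Finally I would plug this gradient-error bound into the standard strongly-convex contraction for projected gradient descent: writing $\theta_j^{(t+1)} - \theta_j^\ast = \Pi_\Theta(\theta_j^{(t)} - \gamma \nabla F^j(\theta_j^{(t)})) - \theta_j^\ast + \gamma(\nabla F^j(\theta_j^{(t)}) - g(\theta_j^{(t)}))$, using non-expansiveness of $\Pi_\Theta$, and applying the co-coercivity bound with $\gamma = 1/L_F$ gives a contraction factor of $1 - \lambda_F/(\lambda_F+L_F)$ plus an additive $\gamma$ times the gradient error. Iterating and summing the resulting geometric series contributes the $2/\lambda_F$ prefactor in the stated bound. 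The main obstacle, as in Theorem \ref{thm.1}, is ensuring that the per-iteration misclassification bound and the uniform gradient concentration both hold simultaneously across all $T$ iterations despite the data being reused across rounds; this is handled by the re-sampling device from \cite{ghosh2020efficient} noted at the start of Section \ref{theory}, which decouples the cluster-assignment step from the gradient-aggregation step at each iteration and lets the union bound over the net absorb the dependence on $t$.
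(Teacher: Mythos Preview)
Your proposal is correct and follows essentially the same route as the paper: bound the effective corrupted fraction $\alpha_j$ in each cluster via the Chebyshev/Markov misclassification argument (Lemma \ref{lemma1.1}), verify $\alpha_j \le \beta$ from \eqref{ineq.4}, apply the Yin-et-al.\ sub-exponential trimmed-mean concentration (the paper's Claim \ref{claim2.1}) with an $\epsilon$-net over $\Theta$ to get the uniform gradient-error bound, and finish with the standard $1-\tfrac{\lambda_F}{\lambda_F+L_F}$ contraction for projected gradient descent. One small caveat: the sentence ``the $\beta$-trimmed mean removes, coordinate-wise, every corrupted entry'' is not literally true (corrupted coordinates need not be extreme), but this does not affect your argument since you correctly rely on the Yin-et-al.\ concentration bound rather than on actual removal.
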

We prove this theorem in Appendix \ref{app.2}. We denote $\omega' = \mathcal{O}(\frac{\sigma d}{\epsilon}(\frac{\beta}{\sqrt{n_{min}}}+\frac{1}{\sqrt{pmn_{min}}})\sqrt{\log(NL_fD)})$. After running the algorithm for $T\ge \frac{L_F+\lambda_F}{\lambda_F}\log(\frac{\lambda_F}{2\omega'}\max_{j\in [k]}\|\theta_j^{(0)}-\theta_j^*\|)$ parallel iterations, we guarantee $\|\theta_j^{(T) }-\theta_j^*\|\le\frac{4\omega'}{\lambda_F}$. Here we achieve the error rate $\mathcal{O}(\frac{\beta d}{\sqrt{n_{min}}}+\frac{d}{\sqrt{pmn_{min}}})$. If $\frac{4\alpha}{p} = \tilde{c}\beta$ for some constant $\tilde{c}$, then the error rate is $\tilde{\mathcal{O}}(\frac{\alpha d}{p\sqrt{n_{min}}}+\frac{d}{\sqrt{pmn_{min}}})$. Since we assume $n_{min}\gtrsim \frac{k\eta^2}{\lambda_F^2\Delta^4}$, the first term $\frac{4c_1\eta^2}{p\delta\lambda_F^2\Delta^4n_{min}}$ in inequality (\ref{ineq.4}) is negligible, which means inequality (\ref{ineq.4}) requires approximately $\frac{4\alpha}{p}\le \beta$. Again, the assumption on $\alpha$ is based on the worst case scenario that all Byzantine machines can collectively attack the smallest cluster. 
\subsection{Comparison with existing algorithm}\label{sect.comparison}
It is important to compare our result with the result of the Three-Stage Algorithm from \cite{ghosh2019robust}. The Three-Stage Algorithm consists of empirical risk minimization, robust clustering of Empirical Risk Minimizers (ERMs), and robust aggregation. Specifically, we analyze the performance of this algorithm when Trimmed $K$-means Clustering Algorithm is used at the second stage and coordinate-wise trimmed mean is used at the third stage. We re-define notations from \cite{ghosh2019robust}. Suppose all machines have the same number of data points $n$. We define $G_S$ as the maximum fraction of mis-clustered points in a cluster after $S$ clustering iterations at stage II. With high probability, suppose $G_S\le \rho$. Define $\tilde{\alpha}_j = \frac{\rho p_j+\alpha }{p_j+\alpha }$. Then according to Theorem 2 from \cite{ghosh2019robust}, the error rate for the $j$-th estimate is $\|\hat{\theta}_j-\theta_j^*\|\le\tilde{\mathcal{O}}(\frac{\tilde{\alpha}_jd}{\sqrt{n}}+\frac{d}{\sqrt{p_jmn}})$. However, this error rate is comparable with our error rate only when $k = 2$ and the parameters $\{\theta_1^*, \theta_2^*\}$ satisfy $\theta_1^* = -\theta_2^*$. In this case, after $3\log m$ steps in stage II, the maximum mis-clustering rate $\rho$ can be 0. Here, $\tilde{\alpha}_j = \frac{\alpha}{p_j+\alpha}$. Since $p_j$ is typically much larger than $\alpha$, we have $\tilde{\alpha}_j\approx\frac{\alpha}{p_j}\le\frac{\alpha}{p}$. This gives an error rate $\tilde{\mathcal{O}}(\frac{\alpha d}{p\sqrt{n}}+\frac{d}{\sqrt{pmn}})$, which is the same as our error rate. When $k$ is large, the error rate will have an extra dependence on $\frac{d^2}{p^2\sqrt{n}}$. See Appendix \ref{comparison} for detailed analysis. The sub-optimal error rate is due to the sub-optimal performance of Trimmed $K$-means Clustering Algorithm in high dimension. Moreover, in the Three-Stage Algorithm, after the second clustering stage, the clustering result is fixed and clustering mistake will not be corrected after the second stage, while in our algorithm, we re-cluster at every iteration adaptively based on new estimates. 

In \cite{ghosh2019robust}, the convergence result is guaranteed under a stronger assumption that the empirical risk minimizers $\{\hat{\theta}^{(i)}\}_{i=1}^{(1-\alpha)m}$ corresponding to non-Byzantine machines are sampled from a mixture of $k$ $\sigma$-sub-gaussian distributions. One example that violates this assumption is the mean estimation problem for Poisson distributions. 

Consider $k$ Poisson distributions $\operatorname{Pois}(\lambda_j^*)$ with unknown parameters $\lambda_j^*$. If a normal machine $i$ is from cluster $S_j^*$, it draws $n_i$ i.i.d data points $x^{i,1},\dots, x^{i,n_i}$ from $\operatorname{Pois}(\lambda_j^*)$. Suppose we use the standard squared loss function $f(\lambda, x) = \|x-\lambda\|^2$. The empirical risk minimizer is given by the sample mean $\bar{x}_i =\frac{1}{n_i}\sum_{l=1}^{n_i}x^{i,l}$ which follows $\frac{1}{n_i}\operatorname{Pois}(n_i\lambda_j^*)$. It does not have the sub-Gaussian property.
\section{Experiments}
In this section, we present results of our experiments. For the experiments, we assume all machines have the same number of data points $n=100$. We begin with experiments on a mixture of linear models with standard squared loss functions. We first generate ground truth parameters, or linear regression coefficients, $\{\theta_j^*\}_{j=1}^k$ by drawing $\theta_j^*\sim\operatorname{Bernoulli}(0.5)$ coordinate-wise $\forall j\in [k]$. We re-scale all parameters such that $\|\theta_j^*\|_2 = 1\,\forall j\in[k]$. For $(1-\alpha)m$ non-Byzantine machines, we distribute them evenly in each cluster so that each cluster has $\frac{(1-\alpha)m}{k}$ non-Byzantine machines. For the $i$-th non-Byzantine machine in the $j$-th cluster, we generate data points $\{(x^{i,l}, y^{i,l})\}_{l=1}^n$ by $y^{i,l} = \theta_j^{*T}x^{i,l}+\epsilon^{i,l}$ where $x^{i,l}\sim\mathcal{N}(0, I_d)$ and $\epsilon^{i,l}\sim\mathcal{N}(0,\sigma^2)$ are independent. For a Byzantine machine $b$, we still sample its regression coefficient $\theta_b$ by $\theta_b\sim\operatorname{Bernoulli}(0.5)$ coordinate-wise, but we re-scale it so that $\|\theta_b\|_2=3$ to make outliers. Then for the $b$-th Byzantine machine, its data points $\{(x^{b,l}, y^{b,l})\}_{l=1}^n$ are generated by $y^{b,l} = \theta_b^Tx^{b,l}+\epsilon^{b,l}, x^{b,l}\sim\mathcal{N}(0, I_d), \epsilon^{b,l}\sim\mathcal{N}(0,\sigma^2)$.

At the $t$-th iteration, after the center machine sends current estimates $\{\theta_j^{(t)}\}_{j=1}^k$, each non-Byzantine machine uses loss function values to determine which cluster it belongs to, and computes its gradient. For each Byzantine machine, it also uses loss function values to determine which cluster it belongs to, but for gradient computation, it computes its gradient at $3\theta_j^{(t)}$ so that the gradient it sends back to the center machine will become an outlier again. 

We run three different algorithms. The first one is Algorithm \ref{alg_1} with both aggregation options, coordinate-wise median and coordinate wise trimmed mean, for 300 iterations. The second one is the IFCA Algorithm \cite{ghosh2020efficient} (using $\operatorname{FedAvg}$ for aggregation) for 300 iterations. The third one is the Three-Stage Algorithm \cite{ghosh2019robust} with Trimmed $K$-means Clustering Algorithm for 100 iterations and coordinate-wise trimmed mean aggregation for 300 iterations. We define $\operatorname{dist} = \frac{1}{k}\sum_{j=1}^k\|\hat{\theta}_j-\theta_j^*\|$ as our evaluation metric where $\{\hat{\theta}_j\}_{j=1}^k$ are estimated parameters obtained from the algorithms. The three algorithms are run in the following 4 settings: (a) $k=2, m=80$, (b) $k=5, m= 200$, (c) $k=10, m=400$, and (d) $k=15, m=600$. For all settings, the fraction of Byzantine machines is $\alpha = 0.05$ and the fraction of trimmed points in the trimmed-mean aggregation is $\beta = 0.05$. The noise scale is $\sigma^2 = 0.2$. Also, we run each setting for $d = [20, 50, 100, 200, 500]$. For each experiment, we run 50 trails. 

In Figure \ref{figure1.a} and \ref{figure1.b}, we plot the average $\operatorname{dist}$ over 50 trails with respect to the dimension $d$ for $k=5$ and $k=10$. When $d$ is small, for example, $d = 20$ or $d= 50$, the Three-Stage Algorithm performs the best. However, when $d$ becomes larger, the Three-Stage Algorithm performs the worst. This is due to the performance of the Trimmed $K$-means Clustering Algorithm used at the second clustering stage. When the dimension is low, the Trimmed $K$-means Clustering Algorithm can guarantee good clustering result, while in high dimension, the final clustering result is sub-optimal. We also observe that the IFCA with $\operatorname{FedAvg}$ in general performs worse than Algorithm \ref{alg_1}. This is due to the fact that $\operatorname{FedAvg}$ is not robust against Byzantine machines. 
In Figure \ref{figure1.c}, we plot the average $\operatorname{dist}$ over 50 trails with respect to the number of clusters $k = [2, 5, 10, 15]$. In general, when $k$ becomes larger, all algorithms perform worse, which validates the point that the error rate has dependence on the minimum fraction of cluster size $p$. In our experiments, $p = \frac{1}{k}$. In conclusion, these plots validate our comparison result in Section \ref{sect.comparison} that the Three-Stage Algorithm gives an extra dependence on $\frac{d^2}{p^2\sqrt{n}}$. Therefore, when the dimension is high or when the number of clusters is large, Algorithm \ref{alg_1} outperforms the Three-Stage algorithm. Moreover, with existence of Byzantine machines, Algorithm \ref{alg_1} outperforms the IFCA framework. 
\begin{figure}\label{figure1}
	\begin{subfigure}{.35\textwidth}
		\centering
		\includegraphics[scale = 0.28]{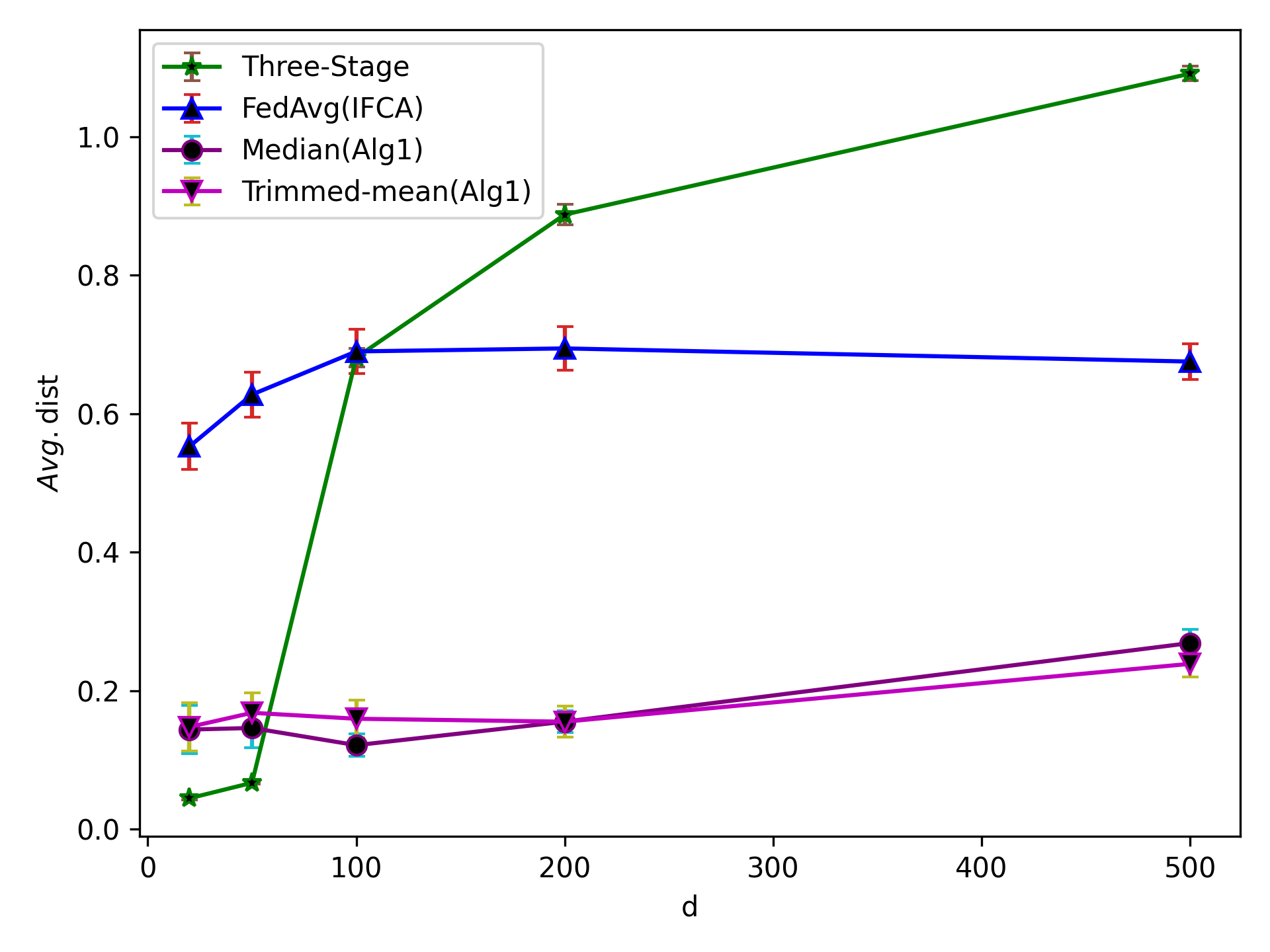}
		\caption{$k=5$}
		\label{figure1.a}
	\end{subfigure}%
\begin{subfigure}{.35\textwidth}
	\centering
	\includegraphics[scale = 0.28]{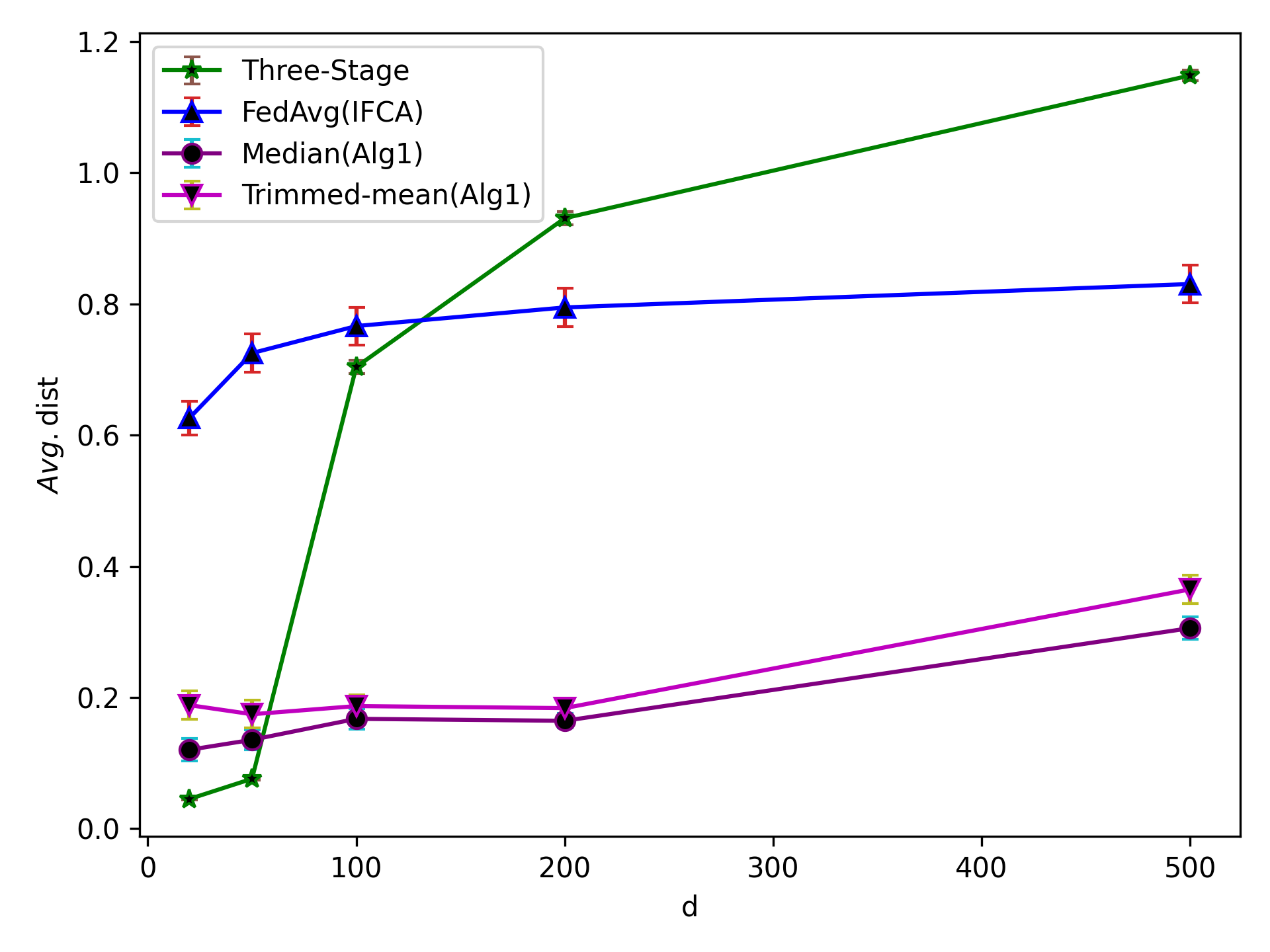}
	\caption{$k=10$}
	\label{figure1.b}
\end{subfigure}%
\begin{subfigure}{.35\textwidth}
	\centering
	\includegraphics[scale = 0.24]{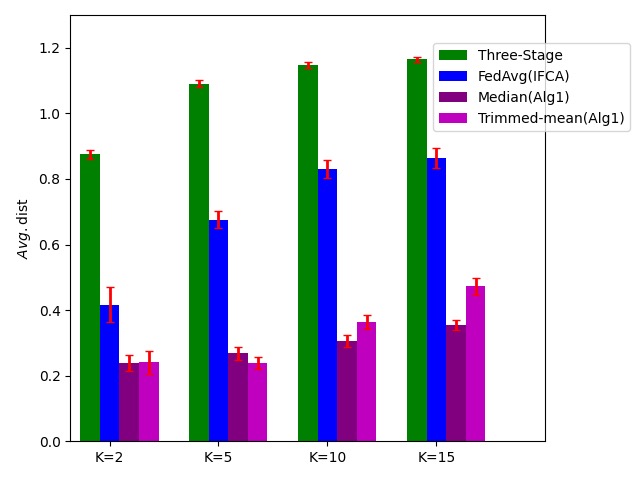}
	\caption{$d=500$}
	\label{figure1.c}
\end{subfigure}
\caption{A comparison of Three-Stage Algorithm, IFCA with $\operatorname{FedAvg}$, Algorithm \ref{alg_1} with Median and Algorithm \ref{alg_1} with Trimmed-mean. In Figure \ref{figure1.a}, $k=5, m=200, n=100,\sigma^2 = 0.2$. In Figure \ref{figure1.b}, $k=10, m= 400, n=100, \sigma^2 = 0.2$. In Figure \ref{figure1.c}, $d = 500$. The error bars in all plots show the standard error over 50 trails. }
\end{figure}

\section{Limitations and future directions}\label{limit}
As mentioned earlier in Section \ref{theory}, the first limitation of our algorithm is that the dependence on dimension $d$ may lead to sub-optimal performance for high dimensional learning problems. The recent paper \cite{zhu2023byzantine}, which studies Byzantine problems in non-clustering setting ($k=1$), proposes several new Byzantine-Robust Federated Learning protocols which improve the dimension dependence and achieve nearly optimal statistical rate with milder assumptions. They also prove a statistical lower bound $\Omega(\sqrt{\frac{\alpha}{n}+\frac{d}{mn}})$. It might be interesting to extend our work and solve the dimension dependence problem by using their new Byzantine-Robust protocols. Moreover, we would like to prove a statistical lower bound for clustering problem with Byzantine machines. A second limitation of our work is the strict requirement of initialization. One approach to address this problem may be to have the machines first send some information to initialize the algorithm. 

\medskip
\bibliographystyle{plainnat}
\bibliography{myref_neurips.bib}
\newpage
\appendix
\textbf{\huge Appendix}
\section{Proof of Theorem \ref{thm.1}}\label{app.1}

Our proof is modified based on proofs from \cite{yin2018byzantine} and \cite{ghosh2020efficient}. The proof of the theorem consists of two parts. First, we prove at the $t$-th step, if the estimate $\theta^{(t)}_j$ is close enough to $\theta_j^*$, the estimate for the $j$-th cluster will also be close to $S_j^*$. Second, we prove conversely when the estimate for the $j$-th cluster is close enough to $S_j^*$ at the $t$-th step, $\theta^{(t+1)}_j$ will be close enough to $\theta_j^*$. We start with one-step analysis.
\subsection{One-step analysis}
Suppose at the $t$-th step, we have $\|\theta_j^{(t)} -\theta_j^*\|\le \frac{1}{4}\sqrt{\frac{\lambda_F}{L_F}}\Delta\,\forall j\in [k]$. We define $S_j^{(t)}:=\{i\in [m]: \hat{j}_i^{(t)} = j\}$ as the set of worker machines clustered into the $j$-th cluster at the $t$-th step. Furthermore, we define $E_i^{j, j'}$ for a normal worker machine $i$ from cluster $S_j^*$ as an event of $i$ being clustered into cluster $S_{j'}^{(t)}$. If $j = j'$, $E_i^{j, j'}$ denotes the event of $i$-th normal worker machine being clustered correctly. Otherwise, it's the event of $i$-th normal worker machine being mis-clustered. 
\begin{lemma}\label{lemma1.1}
	Suppose that a normal worker machine $i\in S_j^*$. Then there exists a constant $c_1$ such that $\forall j'\ne j$, 
	\begin{align}
		\mathbb{P}(E_i^{j, j'})\le c_1\frac{\eta^2}{\lambda_F^2\Delta^4n_{min}}
	\end{align}
	and by union bound, 
	\begin{align}
		\mathbb{P}(\overline{E_i^{j, j}})\le c_1\frac{k\eta^2}{\lambda^2_F\Delta^4n_{min}}.
	\end{align}
\end{lemma}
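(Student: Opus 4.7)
The plan is to exploit the gap between the population losses at $\theta_j^{(t)}$ and $\theta_{j'}^{(t)}$ together with the fact that the empirical loss $F_i$ concentrates around $F^j$ for $i \in S_j^*$. Machine $i$ is mis-clustered into cluster $j'$ exactly when $F_i(\theta_{j'}^{(t)}) \le F_i(\theta_j^{(t)})$. Writing $X_i := F_i(\theta_{j'}^{(t)}) - F_i(\theta_j^{(t)})$ and $\mu := F^j(\theta_{j'}^{(t)}) - F^j(\theta_j^{(t)})$, the event $E_i^{j,j'}$ is contained in $\{|X_i - \mu| \ge \mu\}$ as soon as $\mu > 0$, which reduces the lemma to a one-sided second-moment bound once $\mu$ has been controlled from below.

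First I would lower-bound $\mu$ to order $\lambda_F\Delta^2$. By $\lambda_F$-strong convexity of $F^j$ around its minimizer $\theta_j^*$,
$$F^j(\theta_{j'}^{(t)}) - F^j(\theta_j^*) \ge \tfrac{\lambda_F}{2}\|\theta_{j'}^{(t)} - \theta_j^*\|^2,$$
and by $L_F$-smoothness of $F^j$ together with $\nabla F^j(\theta_j^*) = 0$,
$$F^j(\theta_j^{(t)}) - F^j(\theta_j^*) \le \tfrac{L_F}{2}\|\theta_j^{(t)} - \theta_j^*\|^2.$$
Subtracting these, applying the triangle inequality $\|\theta_{j'}^{(t)} - \theta_j^*\| \ge \|\theta_j^* - \theta_{j'}^*\| - \|\theta_{j'}^{(t)} - \theta_{j'}^*\| \ge \Delta - \tfrac{1}{4}\sqrt{\lambda_F/L_F}\,\Delta \ge 3\Delta/4$ (using $\lambda_F \le L_F$), and invoking the one-step hypothesis $\|\theta_j^{(t)} - \theta_j^*\|^2 \le \tfrac{\lambda_F}{16 L_F}\Delta^2$, a short arithmetic calculation gives $\mu \ge \tfrac{\lambda_F\Delta^2}{4}$ up to a harmless numerical factor.

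Next I would invoke concentration. Because the resampling scheme of \cite{ghosh2020efficient} makes the batch $\{z^{i,l}\}_{l=1}^{n_i}$ independent of the current iterates, $X_i$ is an average of $n_i$ i.i.d.\ random variables with mean $\mu$ and per-sample variance at most $2\mathrm{Var}(f(\theta_{j'}^{(t)};z)) + 2\mathrm{Var}(f(\theta_j^{(t)};z)) \le 4\eta^2$ by Assumption~\ref{assumption.2}. Hence $\mathrm{Var}(X_i) \le 4\eta^2/n_i \le 4\eta^2/n_{min}$, and Chebyshev's inequality yields
$$\mathbb{P}(E_i^{j,j'}) \le \mathbb{P}(|X_i - \mu| \ge \mu) \le \frac{\mathrm{Var}(X_i)}{\mu^2} \le \frac{64\,\eta^2}{\lambda_F^2\Delta^4 n_{min}},$$
which is the first inequality with $c_1$ an absolute constant. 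A union bound over the $k-1$ alternative clusters $j' \ne j$ then produces the second inequality.

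The main obstacle is really bookkeeping rather than anything conceptual: the precise factor $\tfrac{1}{4}\sqrt{\lambda_F/L_F}$ in the one-step hypothesis is exactly what is needed to ensure that the $L_F$-smoothness overhead on $\theta_j^{(t)}$ is dominated by the $\lambda_F$-strong-convexity gain on $\theta_{j'}^{(t)}$, so that $\mu = \Omega(\lambda_F\Delta^2)$ survives the subtraction. Once that is in hand, the remainder is a standard second-moment calculation plus a union bound over $k$ clusters.
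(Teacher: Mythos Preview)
Your proposal is correct and follows essentially the same route as the paper: both establish $F^{j}(\theta_{j'}^{(t)})-F^{j}(\theta_{j}^{(t)})\ge\tfrac{\lambda_F}{4}\Delta^{2}$ via strong convexity, smoothness, and the triangle-inequality bound $\|\theta_{j'}^{(t)}-\theta_j^*\|\ge 3\Delta/4$, then finish with Chebyshev using the variance bound from Assumption~\ref{assumption.2}. The only cosmetic difference is that the paper splits $\{F_i(\theta_j^{(t)})\ge F_i(\theta_{j'}^{(t)})\}$ at the midpoint $c=\tfrac{1}{2}(F^{j}(\theta_j^{(t)})+F^{j}(\theta_{j'}^{(t)}))$ and applies Chebyshev separately to each tail, whereas you apply it once to the centered difference $X_i-\mu$; this changes the numerical constant in $c_1$ but nothing else.
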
 
\begin{proof}
	The proof of Lemma \ref{lemma1.1} is essentially the same as the proof of Lemma 3 in \cite{ghosh2020efficient}. For completeness, we present the proof here. Without loss of generality, it is enough to bound the probability $\mathbb{P}(E_i^{1, j})\,\forall j\ne 1$. Note that if a normal worker machine $i$ is clustered into cluster $S_j^{(t)}$ for some $j\ne 1$, that means at the $t$-th step, $F_i(\theta_1^{(t)})\ge F_i(\theta_j^{(t)})$. Therefore, $E^{1, j} = \{F_i(\theta_1^{(t)})\ge F_i(\theta_j^{(t)})\}$. Then, $\forall c>0$,
	\[\mathbb{P}(E^{1, j}_i)\le \mathbb{P}(F_i(\theta_1^{(t)})>c) + \mathbb{P}(F_i(\theta_j^{(t)})\le c).\]
	We choose $c = \frac{F^1(\theta_1^{(t)})+F^1(\theta_j^{(t)})}{2}$. Then we have 
	\[\mathbb{P}(F_i(\theta_1^{(t)})>c)  = \mathbb{P}(F_i(\theta_1^{(t)}) - F^1(\theta_1^{(t)})>\frac{F^1(\theta_j^{(t)}) - F^1(\theta_1^{(t)})}{2})\]
	and 
	\[\mathbb{P}(F_i(\theta_j^{(t)})\le c) = \mathbb{P}(F_i(\theta_j^{(t)})-F^1(\theta_j^{(t)})\le -\frac{F^1(\theta_j^{(t)}) - F^1(\theta_1^{(t)})}{2}).\]
	By assumptions, we have $\|\theta_j^{(t)}- \theta_1^*\|\ge \|\theta_j^*-\theta_1^*\| - \|\theta_j^{(t)} - \theta_j^*\|\ge \Delta - \frac{1}{4}\sqrt{\frac{\lambda_F}{L_F}}\Delta\ge \frac{3}{4}\Delta$.\\
	By strong convexity of $F^1$, we have 
	\[F^1(\theta_j^{(t)})\ge F^1(\theta_1^*)+\frac{\lambda_F}{2}\|\theta_j^{(t)} - \theta_1^*\|^2\ge F^1(\theta_1^*)+\frac{9\lambda_F}{32}\Delta^2\]
	and by smoothness of $F^1$, we have 
	\[F^1(\theta_1^{(t)})\le F^1(\theta_1^*)+\frac{L_F}{2}\|\theta_1^{(t)} - \theta_1^*\|^2\le F^1(\theta_1^*) +\frac{\lambda_F}{32}\Delta^2.\]
	Therefore, $F^1(\theta_j^{(t)}) - F^1(\theta_1^{(t)})\ge \frac{\lambda_F}{4}\Delta^2$.\\
	By Chebyshev's inequality, 
	\begin{align*}
		\mathbb{P}(F_i(\theta_1^{(t)})>c)\le\frac{64\eta^2}{\lambda_F^2\Delta^4n_i}\le\frac{64\eta^2}{\lambda_F^2\Delta^4n_{min}}
	\end{align*}
	and similarly, $\mathbb{P}(F_i(\theta_1^{(t)})\le c)\le\frac{64\eta^2}{\lambda_F^2\Delta^4n_{min}}$. Therefore, there exists a universal constant $c_1$ such that $\mathbb{P}(E_i^{j, j'})\le c_1\frac{\eta^2}{\lambda_F^2\Delta^4n_{min}}$.
\end{proof}
\noindent \\
By Lemma \ref{lemma1.1}, it is clear that $\mathbb{E}[|S_j^{(t)}\cap\overline{S_j^*} |]\le c_1\frac{\eta^2 m}{\lambda^2_F \Delta^4 n_{min}}$. By Markov's inequality, with probability at least $1-\delta$, $|S_j^{(t)}\cap\overline{S_j^*} |\le c_1\frac{\eta^2 m }{\delta \lambda_F^2 \Delta^4 n_{min}}$. Also by Lemma \ref{lemma1.1} and the assumption $n_{min}\gtrsim \frac{k\eta^2}{\lambda_F^2\Delta^4}$, we know that $\mathbb{P}(E_i^{j, j})>\frac{1}{2}$ for any normal worker machine $i$. This gives us $\mathbb{E}[|S_j^{(t)}\cap S_j^*|]\ge \frac{1}{2}p_j m\ge\frac{1}{2}pm$. Then by Hoeffding's inequality, 
\[\mathbb{P}(|S_j^{(t)} \cap S_j^*|\le \frac{1}{4}p_jm)\le \mathbb{P}(| |S_j^{(t)} \cap S_j^*|-\mathbb{E}[|S_j^{(t)} \cap S_j^*|]|\ge \frac{1}{4}p_jm)\le 2\exp(-cpm)\] for some constant $c$. Note that $\exp(-cpm)\le \frac{1}{\operatorname{poly}(N)}$. Therefore, with probability at least $1-\frac{1}{\operatorname{poly}(N)}$, $|S_j^{(t)} \cap S_j^*|\ge \frac{1}{4}pm$. \\
\\Next, we state the following claim which is adapted from Claim 2 in \cite{yin2018byzantine}. 
\begin{claim}\label{claim1.1}
	Suppose there are $\tilde{m}$ worker machines and $\tilde{\alpha}\tilde{m}$ machines are Byzantine. Let $\mathcal{B}$ denote the set of Byzantine machines and $\mathcal{M}$ denote the set of normal machines. Suppose each normal machine $i$ has $n_i$ data points. Each normal machine draws data i.i.d from some unknown distribution $\mathcal{D}$. Let $F_i(\theta): = \frac{1}{n_i}\sum_{l=1}^{n_i}f(\theta;z)$ be the empirical loss function and $F:=\mathbb{E}_{z\sim \mathcal{D}}[f(\theta; z)]$ be the population loss function. Denote $N= \sum_{i\in\mathcal{M} }n_i$ and $n_{min} = \min_{i\in \mathcal{M}} n_i$. Define $$g_i(\theta) =
	\begin{cases}
	\nabla F_i(\theta) & \text{if $i$ is a normal machine}\\
	*&\text{if $i$ is a Byzantine machine}
	\end{cases} $$ and $g(\theta) = \operatorname{med}\{g_i(\theta):i\in [\tilde{m}]\}$ as the coordinate-wise median. Suppose all assumptions and settings still hold true and $\tilde{\alpha}$ satisfies 
	\begin{align*}
		\tilde{\alpha} +\sqrt{\frac{d\log(1+NL_fD)}{\tilde{m}(1-\tilde{\alpha})}}+0.4748\frac{S}{\sqrt{n_{min}}}\le \frac{1}{2}-\epsilon
	\end{align*}
	for some $\epsilon>0$. Then with probability at least $1-\frac{4d}{(1+NL_fD)^d}$, we have 
	\begin{align*}
		\|g(\theta) - \nabla F(\theta)\|_2 \le 2\sqrt{2}\frac{1}{\tilde{m}n_{min}}+\frac{\sqrt 2C_{\epsilon}}{\sqrt{n_{min}}}\nu(\tilde{\alpha}+\sqrt{\frac{d\log(1+NL_fD)}{\tilde{m}(1-\tilde{\alpha})}}+0.4748\frac{S}{\sqrt{n_{min}}})\,\forall \theta\in\Theta. 
	\end{align*}
\end{claim}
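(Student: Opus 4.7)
The plan is to adapt the proof of Claim 2 in \cite{yin2018byzantine} with the twist that normal machines now carry different numbers of samples $n_i$. The overall structure is coordinate-wise: since the median is applied componentwise, it suffices to bound $|g^h(\theta) - \partial_h F(\theta)|$ uniformly in $h\in[d]$ and $\theta\in\Theta$, and then combine the $d$ coordinate bounds via $\|\cdot\|_2 \le \sqrt{d}\,\|\cdot\|_\infty$ (or, more precisely, by summing the squared componentwise bounds weighted by $L_h^2$ so that $L_f=(\sum L_h^2)^{1/2}$ appears naturally inside the log). First I would fix a generic $\theta\in\Theta$ and a coordinate $h$. For a normal machine $i$, $\partial_h F_i(\theta)$ is the average of $n_i$ i.i.d.\ copies of $\partial_h f(\theta;z)$ whose mean is $\partial_h F(\theta)$, whose variance is at most $\nu_h^2/n_i \le \nu_h^2/n_{min}$ (the per-coordinate variance summed over $h$ being bounded by $\nu^2$), and whose absolute skewness is at most $S/\sqrt{n_i}\le S/\sqrt{n_{min}}$ by the usual $n^{-1/2}$ scaling of the standardized third moment under averaging.

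Next I would invoke the one-dimensional median concentration inequality of Yin et al.\ (their Lemma that controls deviation of the sample median from the mean under bounded skewness and a Byzantine fraction $\tilde\alpha$). Specifically, letting the ``good'' normal fraction be $1-\tilde\alpha$, and setting the quantile-level tolerance as $\sqrt{d\log(1+NL_fD)/(\tilde m(1-\tilde\alpha))}$ (chosen so that a Hoeffding/DKW-type step controls how far the empirical quantile strays from the population quantile on the normal machines), the hypothesis on $\tilde\alpha$ guarantees that the relevant quantile still lies in the bulk where the inverse-CDF of a zero-mean random variable with bounded skewness can be controlled via a Berry--Esseen comparison with a Gaussian. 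This is where the constant $C_\epsilon=\sqrt{2\pi}\exp(\tfrac12(\Phi^{-1}(1-\epsilon))^2)$ enters and where the $0.4748\,S/\sqrt{n_{min}}$ term (the Berry--Esseen constant applied to the standardized average on the smallest machine) appears. The resulting pointwise-in-$\theta$ bound at a fixed coordinate will match the target bound modulo a union bound over $h$.

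To promote the pointwise bound to a uniform one over $\Theta$, I would cover $\Theta$ (convex, compact, diameter $D$) by an $\epsilon_0$-net with $|\mathcal{N}|\le (1+D/\epsilon_0)^d$, pick $\epsilon_0 = 1/(NL_f)$, and take a union bound over $\mathcal{N}$ and over the $d$ coordinates; this produces the $\frac{4d}{(1+NL_fD)^d}$ failure probability in the statement. To interpolate between net points, Assumption \ref{assumption.4} gives $L_h$-Lipschitzness of $\partial_h f(\cdot;z)$, so each $g_i(\theta)$ is $L_f$-Lipschitz in $\theta$ and the coordinate-wise median inherits the same modulus of continuity; the $\nabla F$ side is also $L_F$-Lipschitz. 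With $\epsilon_0 = 1/(NL_f)$, the interpolation error is $O(1/N) \le O(1/(\tilde m n_{min}))$, which is exactly where the leading $2\sqrt{2}/(\tilde m n_{min})$ term in the claim comes from.

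The main obstacle is the bookkeeping in step two: Yin et al.\ handle the case $n_i\equiv n$, so when we replace $n$ by varying $n_i$ we must be careful that the Berry--Esseen comparison, the Hoeffding deviation for the fraction of normals below a threshold, and the final quantile-to-value conversion all degrade gracefully, uniformly in $i$, when the worst machine has only $n_{min}$ samples. The cleanest route is to pessimize machine-by-machine (using $\nu^2/n_{min}$ and $S/\sqrt{n_{min}}$ as upper bounds on the per-machine variance and skewness), which yields the stated bound with $n_{min}$ replacing $n$ throughout, while $N$ only appears inside the logarithmic net term. Once that uniform control is in place, assembling the coordinate bounds back into the $\ell_2$ norm and tracking the constants produces the displayed inequality.
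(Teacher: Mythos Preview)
Your proposal is correct and follows essentially the same route as the paper: both adapt Claim~2 of \cite{yin2018byzantine} coordinate-wise via Berry--Esseen plus a DKW/bounded-difference step, then extend to all of $\Theta$ by an $\epsilon$-net of resolution $1/(NL_f)$ and Lipschitz interpolation, with the heterogeneous $n_i$ handled by replacing $n$ with $n_{min}$. The only cosmetic difference is that the paper standardizes each sample mean by its own scale $\sigma/\sqrt{n_i}$ and then uses the sandwich $\tilde p(\mu+\sigma z/\sqrt{n_{max}})\le \tilde\Phi(z)\le \tilde p(\mu+\sigma z/\sqrt{n_{min}})$ to convert back, which is a slightly cleaner bookkeeping device than your ``pessimize machine-by-machine'' phrasing but leads to the identical bound.
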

\noindent We present the proof of this claim later in Appendix \ref{pf.claim1.1}. Intuitively, Claim \ref{claim1.1} tells us when the fraction of bad machines is bounded, the estimate is close enough to the true parameter. In our clustering problem, bad machines in one cluster consist of two parts: Byzantine machines and mis-clustered machines. In other words, define $\alpha_j: = \frac{|S_j^{(t)}\cap \overline{S_j^*}|}{|S_j^{(t)}|}+\frac{|\mathcal{B}\cap S_j^{(t)}|}{|S_j^{(t)}|}$. Then $\alpha_j$ is the fraction of bad machines in the $j$-th estimated cluster. We want to apply Claim \ref{claim1.1} to cluster $S_j^{(t)}$ and $\alpha_j$. With probability at least $1-\frac{1}{\operatorname{poly}(N)}-\delta$,
\begin{align*}
	&\alpha_j+\sqrt{\frac{d\log(1+(\sum_{i\in S_j^{(t)}\cap\mathcal{M}}n_i)L_fD)}{|S_j^{(t)}|(1-\alpha_j)}}+0.4748\frac{S}{\sqrt{\min_{i\in S_j^{(t)}\cap\mathcal{M}}n_i}}\\&\le \frac{|S_j^{(t)}\cap \overline{S_j^*}|+|\mathcal{B}\cap S_j^{(t)}|}{|S_j^{(t)}|}+\sqrt{\frac{d\log(1+NL_fD)}{|S_j^{(t)}|(1-\alpha_j)}}+0.4748\frac{S}{\sqrt{n_{min}}}\\
	&\le \frac{c_1\frac{\eta^2m}{\delta\lambda^2_F\Delta^4n_{min}}+\alpha m}{\frac{1}{4}pm}+\sqrt{\frac{d\log(1+NL_fD)}{|S_j^{(t)}|(1-\alpha_j)}}+0.4748\frac{S}{\sqrt{n_{min}}}.
\end{align*}
Also note that
\begin{align*}
	|S_j^{(t)}|(1-\alpha_j) &= |S_j^{(t)}|-|S_j^{(t)}\cap \overline{S_j^*}|-|\mathcal{B}\cap S_j^{(t)}|\\
	&\ge |S_j^{(t)}\cap S_j^*|-c_1\frac{\eta^2 m}{\delta\lambda_F^2\Delta^4n_{min}}-\alpha m\\
	&\ge \frac{1}{4}pm -c_1\frac{\eta^2 m}{\delta\lambda_F^2\Delta^4n_{min}}-\alpha m.
\end{align*}
Then 
\begin{align*}
	&\alpha_j+\sqrt{\frac{d\log(1+(\sum_{i\in S_j^{(t)}\mathcal{M}}n_i)L_fD)}{|S_j^{(t)}|(1-\alpha_j)}}+0.4748\frac{S}{\sqrt{\min_{i\in S_j^{(t)}\cap\mathcal{M}}n_i}}\\
	&\le \frac{4c_1\eta^2}{\delta\lambda^2_F\Delta^4n_{min}p}+\frac{4\alpha}{p}+\sqrt{\frac{d\log(1+NL_fD)}{m(\frac{1}{4}p - \frac{c_1\eta^2}{\delta\lambda_F^2\Delta^4n_{min}}-\alpha)}}+0.4748\frac{S}{\sqrt {n_{min}}}\\
	&\le \frac{1}{2}-\epsilon\;\text{for some $\epsilon>0$ by our assumption in the theorem.}
\end{align*}
Since $\alpha_j$ and $|S_j^{(t)}|$ satisfy the condition in Claim \ref{claim1.1}, we conclude 
\begin{align}
	&\nonumber\|g(\theta_j^{(t)}) - \nabla F^j(\theta_j^{(t)})\|_2 \le \frac{2\sqrt{2}}{|S_j^{(t)}|\min_{i\in S_j^{(t)}\cap\mathcal{M}}n_i}
	+\frac{\sqrt{2}C_{\epsilon}\nu}{\sqrt{\min_{i\in S_j^{(t)}\cap\mathcal{M}}n_i}}(\alpha_j+\sqrt{\frac{d\log(1+(\sum_{i\in S_j^{(t)}\cap\mathcal{M}}n_i)L_fD)}{|S_j^{(t)}|(1-\alpha_j)}}
	\\\nonumber
	&+\frac{0.4748S}{\sqrt{\min_{i\in S_j^{(t)}\cap\mathcal{M}}n_i}})\\
	&\nonumber\le \frac{2\sqrt{2}}{|S_j^{(t)}|n_{min}}+\frac{\sqrt{2}C_{\epsilon}\nu}{\sqrt{n_{min}}}(\alpha_j+\sqrt{\frac{d\log(1+NL_fD)}{|S_j|(1-\alpha_j)}}+0.4748\frac{S}{\sqrt{n_{min}}})\\
	&\le \frac{2\sqrt{2}}{n_{min}}+\frac{\sqrt{2}C_{\epsilon}\nu}{\sqrt{n_{min}}}(\frac{4c_1\eta^2}{\delta\lambda^2_F\Delta^4n_{min}p}+\frac{4\alpha}{p}+\sqrt{\frac{d\log(1+NL_fD)}{m(\frac{1}{4}p - \frac{c_1\eta^2}{\delta\lambda_F^2\Delta^4n_{min}}-\alpha)}}+0.4748\frac{S}{\sqrt {n_{min}}})\label{ineq.1}
\end{align}
with probability at least $1-\frac{4d}{(1+L_fD\sum_{i\in S_j^{(t)}\cap\mathcal{M}}n_i)^d}\ge 1-\frac{4d}{(1+\frac{1}{4}pmn_{min}L_fD)^d}$ .\\
Next, we prove the convergence result. 
\begin{align*}
	\|\theta_j^{(t+1)}-\theta_j^*\|&=\|\Pi_{\Theta}(\theta_j^{(t)} - \gamma g(\theta_j^{(t)})) - \theta_j^*\| \\
	&\le\|\theta_j^{(t)}-\gamma g(\theta_j^{(t)}) -\theta_j^*\|\;\text{by the property of Euclidean projection}\\
	&=\|\theta_j^{(t)}-\gamma g(\theta_j^{(t)}) +\gamma \nabla F^j(\theta_j^{(t)})-\gamma \nabla F^j(\theta_j^{(t)})-\theta_j^*\|\\
	&\le \|\theta_j^{(t)}-\gamma \nabla F^j(\theta_j^{(t)})-\theta_j^*\|+\gamma \|g(\theta_j^{(t)}) - \nabla F^j(\theta_j^{(t)})\|
\end{align*}
Choose $\gamma = 1/L_F$. Following the proof from \cite{yin2018byzantine}, it is easy to show $\|\theta_j^{(t)}-\gamma \nabla F^j(\theta_j^{(t)})-\theta_j^*\|\le (1-\frac{\lambda_F}{\lambda_F+L_F})\|\theta_j^{(t)} - \theta_j^*\|$. Combining inequality \ref{ineq.1}, we have $\forall j\in [k]$, 
\begin{align}
	\|\theta_j^{(t+1)}-\theta_j^*\|&\nonumber\le (1-\frac{\lambda_F}{\lambda_F+L_F})\|\theta_j^{(t)} - \theta_j^*\|\\
	&+\frac{1}{L_F}\{\frac{2\sqrt{2}}{n_{min}}+\frac{\sqrt{2}C_{\epsilon}\nu}{\sqrt{n_{min}}}(\frac{4c_1\eta^2}{\delta\lambda^2_F\Delta^4n_{min}p}+\frac{4\alpha}{p}+\sqrt{\frac{d\log(1+NL_fD)}{m(\frac{1}{4}p - \frac{c_1\eta^2}{\delta\lambda_F^2\Delta^4n_{min}}-\alpha)}}+0.4748\frac{S}{\sqrt {n_{min}}})\}\label{ineq.2}
\end{align}
By iterating inequality (\ref{ineq.2}), we get the result. 
\subsection{Proof of Claim \ref{claim1.1}}\label{pf.claim1.1}
The proof for Claim \ref{claim1.1} is modified based on the proof of Claim 2 in \cite{yin2018byzantine}. The main modification is for Lemma 1 in \cite{yin2018byzantine}.
\begin{lemma}
	Consider one dimensional random variable robust estimation problem. For the $i$-th normal worker machine, suppose it draws $n_i$ i.i.d samples $\{z^{i, l}\}_{l=1}^{n_i}$ from one dimensional distribution $\mathcal{D}$. Let $\bar{z}_i = \frac{1}{n_i}\sum_{l=1}^{n_i}z^{i, l}$ denote the sample mean on the $i$-th normal machine. Define $\tilde{p}(z): = \frac{1}{\tilde{m}(1-\tilde{\alpha})}\sum_{i\in \mathcal{M}}\mathds{1}(\bar{z}_i\le z)$ where $\mathcal{M}$ is the set of normal machines. Suppose for a fixed $r>0$ and some $\epsilon>0$, we have 
	\[\tilde{\alpha}+\sqrt{\frac{r}{\tilde{m}(1-\tilde{\alpha})}}+0.4748\frac{\gamma(x)}{\sqrt{n_{min}}}\le \frac{1}{2}-\epsilon\;\text{where $x\sim \mathcal{D}$}.\]
	Then with probability at least $1-4e^{-2r}$, we have 
	\[\tilde{p}(\mu+C_{\epsilon}\frac{\sigma}{\sqrt{n_{min}}}(\tilde{\alpha}+\sqrt{\frac{r}{\tilde{m}(1-\tilde{\alpha})}}+0.4748\frac{\gamma(x)}{\sqrt{n_{min}}}))\ge \frac{1}{2}+\tilde{\alpha}\]
	and 
	\[\tilde{p}(\mu - C_{\epsilon}\frac{\sigma}{\sqrt{n_{min}}}(\tilde{\alpha}+\sqrt{\frac{r}{\tilde{m}(1-\tilde{\alpha})}}+0.4748\frac{\gamma(x)}{\sqrt{n_{min}}}))\le \frac{1}{2}-\tilde{\alpha}\]
	where $\sigma^2,\mu$ are variance and mean of $\mathcal{D}$ respectively. 
\end{lemma}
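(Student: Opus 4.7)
The plan is to adapt the proof of Lemma~1 in Yin et al.\ (2018) to the heterogeneous-sample-size regime, with $n_i \ge n_{min}$ playing the role of the uniform $n$ there. The lemma is a two-sided quantile statement about the empirical CDF $\tilde p$ of the $(1-\tilde\alpha)\tilde m$ sample means $\{\bar z_i\}_{i\in\mathcal{M}}$, so naturally we will combine (a) Berry--Esseen to compare each $F_i(z) := \mathbb{P}(\bar z_i \le z)$ to a Gaussian CDF, (b) a Gaussian-CDF inequality that explains the choice $C_\epsilon = 1/\phi(\Phi^{-1}(1-\epsilon))$, and (c) Hoeffding to concentrate $\tilde p$ around its mean.

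First I would set $q := \tilde\alpha + \sqrt{r/(\tilde m(1-\tilde\alpha))} + 0.4748\,\gamma(x)/\sqrt{n_{min}}$ and define $z^{+} := \mu + C_\epsilon \sigma q/\sqrt{n_{min}}$ and $z^{-} := \mu - C_\epsilon \sigma q/\sqrt{n_{min}}$. By the Berry--Esseen theorem applied to $\bar z_i$, each $F_i$ satisfies $|F_i(z) - \Phi(\sqrt{n_i}(z-\mu)/\sigma)| \le 0.4748\,\gamma(x)/\sqrt{n_i} \le 0.4748\,\gamma(x)/\sqrt{n_{min}}$ (using $n_i \ge n_{min}$). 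Evaluating at $z^{+}$ and using $\sqrt{n_i/n_{min}} \ge 1$ together with monotonicity of $\Phi$ (and $q \ge 0$), I get $\Phi(\sqrt{n_i}(z^{+}-\mu)/\sigma) \ge \Phi(C_\epsilon q)$. A symmetric statement holds at $z^{-}$.

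The pivotal step is the deterministic inequality $\Phi(C_\epsilon q) \ge \tfrac{1}{2} + q$ whenever $q \le \tfrac{1}{2}-\epsilon$. I would prove this by splitting on whether $C_\epsilon q \le u := \Phi^{-1}(1-\epsilon)$: in the first case, $\phi$ is decreasing on $[0,u]$ so $\Phi(C_\epsilon q) - \tfrac{1}{2} = \int_0^{C_\epsilon q}\phi(t)\,dt \ge \phi(u)\cdot C_\epsilon q = q$; in the second case, $\Phi(C_\epsilon q) \ge \Phi(u) = 1-\epsilon \ge \tfrac{1}{2}+q$. Averaging the per-machine bounds yields $p(z^{+}) := \tfrac{1}{\tilde m(1-\tilde\alpha)}\sum_{i\in\mathcal{M}} F_i(z^{+}) \ge \Phi(C_\epsilon q) - 0.4748\,\gamma(x)/\sqrt{n_{min}} \ge \tfrac{1}{2} + q - 0.4748\,\gamma(x)/\sqrt{n_{min}} = \tfrac{1}{2} + \tilde\alpha + \sqrt{r/(\tilde m(1-\tilde\alpha))}$, with the symmetric upper bound $p(z^{-}) \le \tfrac{1}{2} - \tilde\alpha - \sqrt{r/(\tilde m(1-\tilde\alpha))}$.

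Finally, $\tilde m(1-\tilde\alpha)\,\tilde p(z)$ is a sum of independent Bernoulli indicators with mean $\tilde m(1-\tilde\alpha) p(z)$, so Hoeffding gives $|\tilde p(z) - p(z)| \le \sqrt{r/(\tilde m(1-\tilde\alpha))}$ at any fixed $z$ with probability at least $1 - 2e^{-2r}$. Union-bounding over $z^{+}$ and $z^{-}$ costs a factor of two, yielding the $1 - 4e^{-2r}$ probability; the deviation exactly cancels the slack in the bounds on $p(z^{\pm})$ and delivers the two conclusions $\tilde p(z^{+}) \ge \tfrac{1}{2}+\tilde\alpha$ and $\tilde p(z^{-}) \le \tfrac{1}{2}-\tilde\alpha$. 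The only nontrivial step is the case-analysis proof of $\Phi(C_\epsilon q) \ge \tfrac{1}{2}+q$ (already handled by the Yin et al. argument); the novelty here is purely bookkeeping — carefully replacing $n$ by $n_{min}$ in the Berry--Esseen error and in the scaling of $z^{\pm}-\mu$, which is clean because both the error bound and the Gaussian argument are monotone in $n_i$.
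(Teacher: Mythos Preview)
Your proposal is correct and follows essentially the same argument as the paper: Berry--Esseen on each $\bar z_i$, the Gaussian-quantile inequality $\Phi(C_\epsilon q)\ge \tfrac12+q$ for $q\le\tfrac12-\epsilon$ (your case split is equivalent to the paper's mean-value-theorem derivation of $z_1\le C_\epsilon q$ from $\Phi(z_1)=\tfrac12+q$), and Hoeffding on the empirical CDF with a union bound over the two evaluation points. The only cosmetic difference is that you work directly on the $\bar z_i$-scale and invoke $n_i\ge n_{min}$ via monotonicity of $\Phi$ up front, whereas the paper first normalizes to $W_i=(\bar z_i-\mu)\sqrt{n_i}/\sigma$, argues on that scale, and then translates back to $\tilde p$ at the end via an $n_{min}/n_{max}$ sandwich.
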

\begin{proof}
	Define $\sigma_i = \frac{\sigma}{n_i}$, $c_i = 0.4748\frac{\mathbb{E}[|x-\mu|^3]}{\sigma^3\sqrt{n_i}}$ and $W_i = \frac{\bar{z}_i-\mu}{\sigma_i}\,\forall i\in [\tilde{m}]$.\\
	Let $\Phi_i$ be the distribution of $W_i\;\forall i \in \mathcal{M}$ and $\tilde{\Phi}_i(z): = \frac{1}{\tilde{m}(1-\tilde{\alpha})}\sum_{i\in \mathcal{M}}\mathds{1}(W_i\le z)$ be the empirical distribution. \\
	By bounded difference inequality, $\forall r>0$, with probability at least $1-2e^{-2r}$, 
	\[|\tilde{\Phi}_i(z) - \Phi_i(z)|\le \sqrt{\frac{r}{\tilde{m}(1-\tilde{\alpha})}}.\]
	Let $z_1\ge z_2$ be constants such that $\Phi_i(z_1)\ge \frac{1}{2}+\tilde{\alpha}+\sqrt{\frac{r}{\tilde{m}(1-\tilde{\alpha})}}$ and $\Phi_i(z_2)\le \frac{1}{2}-\tilde{\alpha}-\sqrt{\frac{r}{\tilde{m}(1-\tilde{\alpha})}}$.\\
	Then by union bound, with probability at least $1-4e^{-2r}$, $\tilde{\Phi}_i(z_1)\ge \frac{1}{2}+\tilde{\alpha}$ and $\tilde{\Phi}_i(z_2)\le \frac{1}{2}-\tilde{\alpha}$.\\
	Next, we want to choose proper $z_1$ and $z_2$. By Berry-Esseen theorem, $\Phi_i(z_1)\ge \Phi(z_1)-0.4748\frac{\mathbb{E}[|x-\mu|^3]}{\sigma^2\sqrt{n_{min}}}$ where $\Phi$ is the cumulative distribution function for standard Gaussian distribution. Therefore, it is enough to find $z_1$ such that 
	\[\Phi(z_1) = \frac{1}{2}+\tilde{\alpha}+\sqrt{\frac{r}{\tilde{m}(1-\tilde{\alpha})}}+0.4748\frac{\mathbb{E}[|x-\mu|^3]}{\sigma^3\sqrt{n_{min}}}.\]
	By Mean Value Theorem, $\exists \zeta\in [0, z_1]$ such that $$\tilde{\alpha}+\sqrt{\frac{r}{\tilde{m}(1-\tilde{\alpha})}}+0.4748\frac{\mathbb{E}[|x-\mu|^3]}{\sigma^3\sqrt{n_{min}}} = z_1\Phi'(\zeta) \ge \frac{z_1}{\sqrt{2\pi}}e^{-\frac{z_1^2}{2}}.$$ 
	Based on our assumption that for some $\epsilon\in (0, \frac{1}{2})$, $\tilde{\alpha}+\sqrt{\frac{r}{\tilde{m}(1-\tilde{\alpha})}}+0.4748\frac{\gamma(x)}{\sqrt{n_{min}}}\le \frac{1}{2}-\epsilon$, we know that $z_1\le \Phi^{-1}(1-\epsilon)$. \\
	Therefore, $ \tilde{\alpha}+\sqrt{\frac{r}{\tilde{m}(1-\tilde{\alpha})}}+0.4748\frac{\gamma(x)}{\sqrt{n_{min}}}\ge \frac{z_1}{\sqrt{2\pi}}\exp(-\frac{1}{2}(\Phi^{-1}(1-\epsilon))^2)$. \\
	Denote $C_\epsilon = \sqrt{2\pi}\exp(\frac{1}{2}(\Phi^{-1}(1-\epsilon))^2)$. Then $z_1\le C_{\epsilon}(\tilde{\alpha}+\sqrt{\frac{r}{\tilde{m}(1-\tilde{\alpha})}}+0.4748\frac{\gamma(x)}{\sqrt{n_{min}}})$.\\
	Similarly, $z_2\ge -C_{\epsilon}(\tilde{\alpha}+\sqrt{\frac{r}{\tilde{m}(1-\tilde{\alpha})}}+0.4748\frac{\gamma(x)}{\sqrt{n_{min}}})$.\\
	Then with probability at least $1-4e^{-2r}$, 
	\[\tilde{\Phi}_i(C_{\epsilon}(\tilde{\alpha}+\sqrt{\frac{r}{\tilde{m}(1-\tilde{\alpha})}}+0.4748\frac{\gamma(x)}{\sqrt{n_{min}}}))\ge \frac{1}{2}+\tilde{\alpha}\]
	and 
	\[\tilde{\Phi}_i(-C_{\epsilon}(\tilde{\alpha}+\sqrt{\frac{r}{\tilde{m}(1-\tilde{\alpha})}}+0.4748\frac{\gamma(x)}{\sqrt{n_{min}}}))\le \frac{1}{2}-\tilde{\alpha}.\]
	\\
	Note that
	\begin{align*}
		\tilde{\Phi}_i(z):=\frac{1}{\tilde{m}(1-\tilde{\alpha})}\sum_{i\in  \mathcal{M}}\mathds{1}(\bar{z}_i\le \sigma_iz+\mu) = \frac{1}{\tilde{m}(1-\tilde{\alpha})}\sum_{i\in  \mathcal{M}}\mathds{1}(\bar{z}_i\le\frac{\sigma}{\sqrt{n_i}}z+\mu).
	\end{align*}
	Also $\forall n_i$, $\frac{\sigma}{\sqrt{n_{max}}}z+\mu\le \frac{\sigma}{\sqrt{n_i}}z+\mu\le \frac{\sigma}{\sqrt{n_{min}}}z+\mu$. Then we have $\tilde{p}(\frac{\sigma}{\sqrt{n_{max}}}z+\mu)\le \tilde{\Phi}_i(z)\le \tilde{p}(\frac{\sigma}{\sqrt{n_{min}}}z+\mu)$. \\
	Therefore, with probability at least $1-4e^{-2r}$,
	\[\tilde{p}(\mu+C_{\epsilon}\frac{\sigma}{\sqrt{n_{min}}}(\tilde{\alpha}+\sqrt{\frac{r}{\tilde{m}(1-\tilde{\alpha})}}+0.4748\frac{\gamma(x)}{\sqrt{n_{min}}}))\ge \frac{1}{2}+\tilde{\alpha}\]
	and 
	\[\tilde{p}(\mu - C_{\epsilon}\frac{\sigma}{\sqrt{n_{max}}}(\tilde{\alpha}+\sqrt{\frac{r}{\tilde{m}(1-\tilde{\alpha})}}+0.4748\frac{\gamma(x)}{\sqrt{n_{min}}}))\le\frac{1}{2}-\tilde{\alpha}.\]
	Since $\frac{-1}{\sqrt{n_{min}}}\le \frac{-1}{\sqrt{n_{max}}}$, we further have 
	\[\tilde{p}(\mu - C_{\epsilon}\frac{\sigma}{\sqrt{n_{min}}}(\tilde{\alpha}+\sqrt{\frac{r}{\tilde{m}(1-\tilde{\alpha})}}+0.4748\frac{\gamma(x)}{\sqrt{n_{min}}}))\le\frac{1}{2}-\tilde{\alpha}.\]
\end{proof}
The rest of the proof follows exactly the same as proof in \cite{yin2018byzantine} except that we substitute $n$ with $n_{min}$. 
\section{Proof of Theorem \ref{thm.2}}\label{app.2}
The proof of Theorem \ref{thm.2} essentially has the same structure as proof for Theorem \ref{thm.1}. The first part is still to prove at the $t$-th step, if the estimate $\theta^{(t)}_j$ is close to $\theta_j^*$, the estimate for the $j$-th cluster is also close to $S_j^*$. We assume at the $t$-th step, we have $ \|\theta_j^{(t)}-\theta_j^*\|\le \frac{1}{4}\sqrt{\frac{\lambda_F}{L_F}}\Delta$. Again, from Lemma \ref{lemma1.1}, we know that with probability at least $1-\delta$, $|S_j^{(t)}\cap \overline{S_j^*}|\le c_1\frac{\eta^2 m}{\lambda_F^2\Delta^4n_{min}}$ and with probability at least $1-\frac{1}{\operatorname{poly}(N)}$, $|S_j^{(t)}\cap S_j^*|\ge \frac{1}{4}pm$. 
\subsection{One-step Analysis}
In order to do one-step analysis, we first present the following claim which is adapted from Claim 5 in \cite{yin2018byzantine}. 
\begin{claim}\label{claim2.1}
	Suppose there are $\tilde{m}$ worker machines and $\tilde{\alpha}\tilde{m}$ machines are Byzantine. Let $\mathcal{B}$ denote the set of Byzantine machines and $\mathcal{M}$ denote the set of normal machines. Suppose each normal worker machine $i$ has $n_i$ data points, and draws data i.i.d from some unknown distribution $\mathcal{D}$. Let $F_i(\theta): = \frac{1}{n_i}\sum_{l=1}^{n_i}f(\theta;z)$ be the empirical loss function and $F:=\mathbb{E}_{z\sim \mathcal{D}}[f(\theta; z)]$ be the population loss function. Denote $N= \sum_{i\in\mathcal{M}}n_i$ and $n_{min} = \min_{i\in \mathcal{M}} n_i$. Define $$g_i(\theta) =
	\begin{cases}
	\nabla F_i(\theta) & \text{if $i$ is a normal machine}\\
	*&\text{if $i$ is a Byzantine machine}
	\end{cases} $$ and $g(\theta) = \operatorname{trmean}_{\beta}\{g_i(\theta):i\in [\tilde{m}]\}$ as the coordinate-wise $\beta$-trimmed mean. Suppose all assumptions and settings still hold true and $\tilde{\alpha}$ satisfies $\tilde{\alpha}\le\beta\le \frac{1}{2}-\epsilon$ for some $\epsilon >0$. Then with probability at least $1-\frac{4d}{(1+NL_fD)^d}$,
	\begin{align*}
		\|g(\theta)-\nabla F(\theta)\|\le \frac{\sigma d}{\epsilon}(\frac{3\sqrt{2}\beta}{\sqrt{n_{min}}}+\frac{2}{\sqrt{\tilde{m}n_{min}}})\sqrt{\log(1+NL_fD)+\frac{1}{d}\log\tilde{m}}+\tilde{\mathcal{O}}(\frac{1}{m\sqrt{n_{min}}}+\frac{\beta}{n_{min}}+\frac{1}{N}).
	\end{align*}
	
\end{claim}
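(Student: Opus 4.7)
The plan is to mirror the proof of Claim 5 in \cite{yin2018byzantine}, modifying it only where the heterogeneous normal-machine sample sizes $\{n_i\}$ enter. The argument splits into three stages: (i) a pointwise bound on the $h$-th coordinate of $g(\theta)-\nabla F(\theta)$ for a fixed coordinate $h\in[d]$ and parameter $\theta\in\Theta$; (ii) a union bound over the $d$ coordinates together with the conversion $\|\cdot\|_2\le\sqrt d\,\|\cdot\|_\infty$ to control $\|g(\theta)-\nabla F(\theta)\|_2$ at that $\theta$; and (iii) a covering-net argument that lifts the pointwise bound to a uniform bound over $\Theta$.

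For the pointwise step, fix $\theta$ and $h$. Each normal machine $i\in\mathcal{M}$ contributes $\partial_h F_i(\theta)=\frac{1}{n_i}\sum_{l=1}^{n_i}\partial_h f(\theta;z^{i,l})$, an average of $n_i$ i.i.d.\ $\sigma$-sub-exponential samples with mean $\partial_h F(\theta)$, so a Bernstein-type inequality gives $\mathbb{P}(|\partial_h F_i(\theta)-\partial_h F(\theta)|>t)\le 2\exp(-c\, n_i\min(t^2/\sigma^2,t/\sigma))$, which is dominated by the same bound with $n_{min}$ in place of $n_i$. I would then reuse the quantile-sandwich trick of \cite{yin2018byzantine}: because $\tilde{\alpha}\le\beta$, after removing the largest and smallest $\beta$ fractions of the $h$-th coordinates of $\{g_i(\theta)\}_{i=1}^{\tilde m}$, the surviving set still contains at least $(1-2\beta)\tilde m$ normal machines, and the trimmed mean can be sandwiched between two averages of normal-machine coordinates whose values lie between appropriate empirical quantiles of the normal samples. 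Controlling (a) the deviation of these empirical quantiles from $\partial_h F(\theta)$ via the tail bound above with $n_{min}$, and (b) the inlier average by Bernstein concentration on a mean of order $\tilde m$ sub-exponential terms with scale $\sigma/\sqrt{n_{min}}$, yields a pointwise error of order $(\sigma/\epsilon)(\beta/\sqrt{n_{min}}+1/\sqrt{\tilde m\, n_{min}})\sqrt{\log\tilde m}$ plus lower-order $\tilde{\mathcal O}(\beta/n_{min}+1/(\tilde m\sqrt{n_{min}}))$ terms.

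To upgrade to a uniform bound, I would build a $\delta$-net $\Theta_\delta$ of $\Theta$ of cardinality at most $(1+D/\delta)^d$ and set $\delta=1/(NL_f)$, so $|\Theta_\delta|\le(1+NL_fD)^d$. Union-bounding the pointwise result over the $d$ coordinates and over $\Theta_\delta$ gives a high-probability bound at every grid point; the $\ell_\infty$-to-$\ell_2$ conversion contributes an extra factor $\sqrt d$, which combines with the $\sqrt{d\log(1+NL_fD)}$ arising from the size of the net to produce the $\sigma d\sqrt{\log(1+NL_fD)+\frac{1}{d}\log\tilde m}$ prefactor in the claim. By Assumption~\ref{assumption.4}, $\partial_h f(\cdot;z)$ is $L_h$-Lipschitz, so $\|\nabla F_i(\theta)-\nabla F_i(\theta')\|\le L_f\|\theta-\theta'\|$ for every normal machine, and the trimmed mean therefore varies by at most $L_f\delta=1/N$ between any $\theta$ and its nearest point in $\Theta_\delta$, as does $\nabla F$; this discretization error is what produces the $\tilde{\mathcal O}(1/N)$ term.

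The main obstacle is the quantile/sandwich step under heterogeneous sample sizes. In \cite{yin2018byzantine} every machine has the same $n$, so the coordinate-$h$ gradients are i.i.d.\ with a single scale and empirical-quantile concentration is a standard off-the-shelf statement. Here the scale $\sigma/\sqrt{n_i}$ varies across machines, so I would exploit the fact that the Bernstein tail above is sharper than the same bound with $n_{min}$ substituted for $n_i$; hence replacing $n$ by $n_{min}$ throughout the original empirical-quantile and inlier-mean concentration arguments is legitimate and only inflates the final constants. With that replacement in place, the rest of the argument transfers from \cite{yin2018byzantine} essentially verbatim.
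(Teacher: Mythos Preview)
Your proposal is correct and follows essentially the same route as the paper: adapt Claim~5 of \cite{yin2018byzantine} by replacing the common sample size $n$ with $n_{\min}$ wherever Bernstein-type concentration is invoked, then lift the pointwise coordinate bound to all of $\Theta$ via a $\delta$-net with $\delta=1/(NL_f)$ and the Lipschitz property from Assumption~\ref{assumption.4}.

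One small point of precision worth noting: the paper's pointwise step does not use an empirical-quantile sandwich. It instead proves the deterministic inequality
\[
|\operatorname{trmean}_\beta\{\bar z_i\}-\mu|\le \frac{t+3\beta s}{1-2\beta}
\]
valid whenever (i) the \emph{full} average of the normal-machine sample means deviates from $\mu$ by at most $t$ and (ii) the \emph{maximum} individual normal-machine deviation is at most $s$; both events are then controlled by Bernstein with $n_{\min}$ substituted for $n_i$. The decomposition behind this bound is purely algebraic: write the trimmed sum as (all normal) $-$ (trimmed normal) $+$ (untrimmed Byzantine), and observe that any untrimmed Byzantine value must lie within the range of the normal values, hence within $s$ of $\mu$. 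Your ``quantile sandwich'' language is closer to the median analysis (Claim~\ref{claim1.1}); the mechanism you actually need here is the max-deviation bound $s$ together with the aggregate-mean bound $t$, which is what your items (a) and (b) effectively deliver once phrased this way.
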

\noindent The proof of Claim \ref{claim2.1} is in Appendix \ref{pf.claim.2.1}. Similarly as the proof for Theorem \ref{thm.1}, we define $\alpha_j = \frac{|S_j^{(t)}\cap \overline{S_j^*}|}{|S_j^{(t)}|}+\frac{|\mathcal{B}\cap S_j^{(t)}|}{|S_j^{(t)}|}$ where $\mathcal{B}$ denotes the set of Byzantine machines. We apply Claim \ref{claim2.1} to cluster $S_j^{(t)}$ and $\alpha_j$. With probability at least $1-\frac{1}{\operatorname{poly}(N)}-\delta$, 
\[\alpha_j\le \frac{4c_1\eta^2}{\delta \lambda_F^2\Delta^4n_{min}p}+\frac{4\alpha}{p}.\]
Then by our assumption, $\alpha_j\le\beta\le \frac{1}{2}-\epsilon$. Applying Claim \ref{claim2.1}, we have 
\begin{align*}
	\|g(\theta_j^{(t)}) - \nabla F^j(\theta_j^{(t)})\|&\le \frac{\sigma d}{\epsilon}(\frac{3\sqrt{2}\beta}{\sqrt{\min_{i\in S_j^{(t)}\cap\mathcal{M}}n_i}}+\frac{2}{\sqrt{|S_j^{(t)}|\min_{i\in S_j^{(t)}\cap\mathcal{M}}n_i}})\sqrt{\log(1+NL_fD)+\frac{1}{d}\log m}\\&+\tilde{\mathcal{O}}(\frac{1}{|S_j^{(t)}|\sqrt{\min_{i\in S_j^{(t)}\cap\mathcal{M}}n_i}}+\frac{\beta}{\min_{i\in S_j^{(t)}\cap\mathcal{M}}n_i}+\frac{1}{N})\\
	&\le  \frac{\sigma d}{\epsilon}(\frac{3\sqrt{2}\beta}{\sqrt{n_{min}}}+\frac{2}{\sqrt{|S_j^{(t)}|n_{min}}})\sqrt{\log(1+NL_fD)+\frac{1}{d}\log m}\\&+\tilde{\mathcal{O}}(\frac{1}{|S_j^{(t)}|\sqrt{n_{min}}}+\frac{\beta}{n_{min}}+\frac{1}{N}).
\end{align*}
Notice that $|S_j^{(t)}|\ge |S_j^{(t)}\cap S_j^*|\ge \frac{1}{4}pm$. Therefore, with probability at least
\[1-\frac{4d}{(1+L_fD\sum_{i\in S_j^{(t)}}n_i)^d}\ge 1-\frac{4d}{(1+\frac{1}{4}pmn_{min}L_fD)^d},\]
we have
\begin{align*}
	\|g(\theta_j^{(t)}) - \nabla F^j(\theta_j^{(t)})\|&\le \frac{\sigma d}{\epsilon}(\frac{3\sqrt{2}\beta}{\sqrt{n_{min}}}+\frac{4}{\sqrt{pmn_{min}}})\sqrt{\log(1+NL_fD)+\frac{1}{d}\log m}\\&+\tilde{\mathcal{O}}(\frac{1}{pm\sqrt{n_{min}}}+\frac{\beta}{n_{min}}+\frac{1}{N}).
\end{align*}
Next, it follows the exactly same argument as the proof for Theorem \ref{thm.1} to get 
\begin{align}\label{ineq.3}
	\|\theta_j^{(t+1)}-\theta_j^*\|\le (1-\frac{\lambda_F}{\lambda_F+L_F})\|\theta_j^{(t)}-\theta_j^*\|+\frac{2}{\lambda_F}\mathcal{O}(\frac{\sigma d}{\epsilon}(\frac{\beta}{\sqrt{n_{min}}}+\frac{1}{\sqrt{pmn_{min}}})\sqrt{\log(NL_fD)})
\end{align}
where in this inequality we omit universal constants and higher order terms. By iterating inequality (\ref{ineq.3}), we get the final result. 
\subsection{Proof of Claim \ref{claim2.1}}\label{pf.claim.2.1}
The proof for Claim \ref{claim2.1} is modified based on the proof of Claim 5 from \cite{yin2018byzantine}. First recall Bernstein's inequality for independent sub-exponential random variables. 
\begin{claim}(Bernstein's inequality)
	Let $X_1, \dots, X_N$ be independent, mean zero and $\sigma_i$-sub-exponential random variables. Then $\forall t\ge 0$, 
	\[\mathbb{P}(|\frac{1}{N}\sum_{i=1}^NX_i|\ge t)\le 2\exp(-N\min\{\frac{t^2}{2(\max_{i}\sigma_i)^2}, \frac{t}{2\max_{i}\sigma_i}\}).\]
\end{claim}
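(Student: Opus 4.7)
The plan is to follow the classical Chernoff--Cramer argument for sums of independent sub-exponential variables. First, I would fix the operative definition of ``$\sigma_i$-sub-exponential'' via its moment generating function characterization: $\mathbb{E}[X_i]=0$ together with $\mathbb{E}[e^{\lambda X_i}]\le \exp(\lambda^2\sigma_i^2/2)$ for all $|\lambda|\le 1/\sigma_i$. This is the form that produces precisely the two-regime Bernstein exponent stated in the claim, so I would lock it in up front; alternative normalizations (e.g.\ via an Orlicz $\psi_1$-norm) yield the same tail up to absolute constants and would only shift constants.

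For the upper tail, I would apply Markov's inequality to the exponential and use independence. For any $\lambda\in(0,1/\max_i\sigma_i]$,
\begin{equation*}
\mathbb{P}\!\left(\tfrac{1}{N}\textstyle\sum_{i=1}^{N} X_i\ge t\right)\le e^{-\lambda N t}\prod_{i=1}^{N}\mathbb{E}[e^{\lambda X_i}]\le \exp\!\left(-\lambda N t+\tfrac{\lambda^2}{2}\textstyle\sum_{i=1}^{N}\sigma_i^2\right).
\end{equation*}
Bounding $\sum_i\sigma_i^2\le N(\max_i\sigma_i)^2$ yields $\exp(-\lambda N t+\tfrac{1}{2}\lambda^2 N(\max_i\sigma_i)^2)$. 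I would then optimize this quadratic-in-$\lambda$ expression over the feasible interval $(0,1/\max_i\sigma_i]$. The unconstrained minimizer is $\lambda^\star=t/(\max_i\sigma_i)^2$, which is admissible exactly when $t\le \max_i\sigma_i$; plugging it in gives the sub-Gaussian exponent $-N t^2/(2(\max_i\sigma_i)^2)$. When $t>\max_i\sigma_i$, the constrained minimum is attained at the boundary $\lambda=1/\max_i\sigma_i$, giving the sub-exponential exponent $-N t/(2\max_i\sigma_i)$. Combining both cases and writing the smaller of the two exponents as a $\min$ reproduces exactly the bound in the statement.

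The two-sided bound then follows by applying the identical argument to $\{-X_i\}$, which is again mean-zero and $\sigma_i$-sub-exponential because the MGF condition is symmetric in the sign of $\lambda$; a union bound over the two one-sided events accounts for the factor of $2$ in front of the exponential. The only genuinely subtle point is respecting the feasibility constraint $|\lambda|\le 1/\max_i\sigma_i$ throughout the optimization, since this constraint is precisely what generates the two-regime $\min\{t^2/\sigma^2,\,t/\sigma\}$ structure rather than a pure sub-Gaussian tail; skipping it would give the wrong answer in the large-deviation regime $t>\max_i\sigma_i$. Apart from that bookkeeping and pinning down the sub-exponential convention at the start, the proof is a direct Cramer--Chernoff computation with no further obstacles.
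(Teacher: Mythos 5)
Your proof is correct and is the standard Cram\'er--Chernoff argument for sub-exponential variables. Note that the paper itself does not prove this claim: it is stated with the phrase ``recall Bernstein's inequality'' as a known fact (it is Claim~5/the corresponding lemma machinery from Yin et al.\ (2018)), so there is no internal proof to compare against. Your derivation fills that gap correctly: the MGF-based definition you fix is the one compatible with the stated constants, the bound $\sum_i\sigma_i^2\le N(\max_i\sigma_i)^2$ is the right way to handle heterogeneous $\sigma_i$, and your careful treatment of the feasibility constraint $\lambda\le 1/\max_i\sigma_i$ is exactly what produces the two-regime $\min\{t^2/\sigma^2,\,t/\sigma\}$ exponent; the factor of $2$ from the union bound over the two tails is also accounted for.
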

We use Bernstein's inequality to prove the following lemma.
\begin{lemma}
	Consider one dimensional random variable robust estimation problem. For the $i$-th normal worker machine, suppose it draws $n_i$ i.i.d samples $\{z^{i,l}\}_{l=1}^{n_i}$ of some one dimensional random variable $z\sim \mathcal{D}$. Suppose $\mathbb{E}[z] = \mu$ and $z$ is $\sigma$-sub-exponential. Denote $\bar{z}_i = \frac{1}{n_i}\sum_{l=1}^{n_i}z^{i, l}$ as the sample mean on the $i$-th normal machine. Let $\mathcal{B}$ denote the set of Byzantine machines and $n_{min} = \min_{i\in[\tilde{m}]\setminus \mathcal{B}}n_i$. Then $\forall t\ge 0$, 
	\[\mathbb{P}\{|\frac{1}{(1-\tilde{\alpha})\tilde{m}}\sum_{i\in [\tilde{m}]\setminus\mathcal{B}}\bar{z}_i-\mu|\ge t\}\le 2\exp(-(1-\tilde{\alpha})\tilde{m}\min\{\frac{t^2n_{min}}{2\sigma^2}, \frac{t\sqrt{n_{min}}}{2\sigma}\}),\]
	and $\forall s\ge 0$, 
	\[\mathbb{P}\{\max_{i\in [\tilde{m}]\setminus\mathcal{B}}\{|\bar{z}_i-\mu|\}\ge s\}\le 2(1-\tilde{\alpha})\tilde{m}\exp(-n_{min}\min\{\frac{s}{2\sigma}, \frac{s^2}{2\sigma^2}\}).\]
	Furthermore, when $\tilde{\alpha}\le\beta$, $|\frac{1}{(1-\tilde{\alpha})\tilde{m}}\sum_{i\in [\tilde{m}]\setminus\mathcal{B}}\bar{z}_i-\mu|\le t$ and $\max_{i\in [\tilde{m}]\setminus\mathcal{B}}\{|\bar{z}_i-\mu|\}\le s $, we have
	\[|\operatorname{trmean}_{\beta}\{\bar{z}_i:i\in [\tilde{m}]\}-\mu|\le\frac{t+3\beta s}{1-2\beta}.\]
\end{lemma}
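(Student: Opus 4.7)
The plan is to prove the three claims in sequence, reusing the Bernstein inequality for independent sub-exponentials stated just above the lemma.

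For the first concentration bound I would first show that each centred sample mean $Y_i := \bar{z}_i - \mu$ is $(\sigma/\sqrt{n_i})$-sub-exponential: since $Y_i$ is an average of $n_i$ independent centred $\sigma$-sub-exponential variables, its moment generating function factorises as $\exp(\sigma^2\lambda^2/(2n_i))$ on a range of $\lambda$ that comfortably covers the sub-exponential window $|\lambda|\le \sqrt{n_i}/\sigma$. Then the stated Bernstein inequality applied to the independent variables $\{Y_i\}_{i\in[\tilde m]\setminus\mathcal{B}}$, with $\max_i(\sigma/\sqrt{n_i}) = \sigma/\sqrt{n_{min}}$, delivers the first tail bound directly.

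For the maximum bound I would apply Bernstein to a single $Y_i$ to obtain $\mathbb{P}(|\bar z_i - \mu|\ge s)\le 2\exp(-n_i\min\{s^2/(2\sigma^2),\, s/(2\sigma)\})$, and then take a union bound over the $(1-\tilde\alpha)\tilde m$ normal indices, using $n_i\ge n_{min}$ in the exponent.

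The heart of the proof is the deterministic trimmed-mean bound. Let $U$ denote the set of indices surviving the $\beta$-trim, decompose it as $U' = U\cap([\tilde m]\setminus\mathcal{B})$ and $U'' = U\cap\mathcal{B}$, and let $R = ([\tilde m]\setminus\mathcal{B})\setminus U$ be the set of normal indices that were discarded by the trim. A simple count gives $|U''|\le \tilde\alpha\tilde m\le \beta\tilde m$ and $|R| = (2\beta-\tilde\alpha)\tilde m + |U''|\le 2\beta\tilde m$. The key structural observation is that every value kept in $U$ lies in $[\mu-s,\mu+s]$: since at most $\tilde\alpha\tilde m\le\beta\tilde m$ of the entries are Byzantine, at least one of the $\beta\tilde m+1$ smallest entries of the full collection is a normal sample mean, which forces the smallest surviving value to be at least $\mu-s$, and symmetrically for the largest. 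Splitting the trimmed sum as a contribution over $U'$ plus a contribution over $U''$, rewriting $\sum_{i\in U'}(\bar z_i-\mu) = \sum_{i\in[\tilde m]\setminus\mathcal{B}}(\bar z_i-\mu) - \sum_{i\in R}(\bar z_i-\mu)$, and applying the triangle inequality yields $|T-\mu|\le \bigl[(1-\tilde\alpha)\tilde m\,t + 2\beta\tilde m\,s + \beta\tilde m\,s\bigr]/((1-2\beta)\tilde m) \le (t+3\beta s)/(1-2\beta)$, as desired.

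The step that I expect to require the most care is the argument that every surviving value really lies in $[\mu-s,\mu+s]$, together with the bookkeeping that bounds $|R|$ by $2\beta\tilde m$; this is precisely the point at which the hypothesis $\tilde\alpha\le\beta$ enters the proof. Everything else is a clean application of Bernstein or the triangle inequality, so the probabilistic part decouples cleanly from the deterministic robust-statistics part.
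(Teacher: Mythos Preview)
Your proposal is correct and follows essentially the same route as the paper's proof: Bernstein on each $\bar z_i-\mu$ plus a union bound for the max, Bernstein on the independent $(\sigma/\sqrt{n_i})$-sub-exponential variables $\{\bar z_i-\mu\}$ for the average, and for the deterministic part the identical decomposition $\sum_{i\in U}= \sum_{i\in\mathcal{M}} - \sum_{i\in\mathcal{M}\cap\mathcal{T}} + \sum_{i\in\mathcal{B}\cap\mathcal{U}}$ with the same three bounds $(1-\tilde\alpha)\tilde m\,t$, $2\beta\tilde m\,s$, $\beta\tilde m\,s$. Your explicit justification that every surviving value lies in $[\mu-s,\mu+s]$ is in fact a step the paper uses tacitly (when it bounds each Byzantine survivor by $\max_{i\in\mathcal{M}}|\bar z_i-\mu|$), so your write-up is slightly more detailed at exactly the point you flagged as needing care.
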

\begin{proof}
	By Bernstein's inequality, we know $\forall s\ge 0$ and $i\in [\tilde{m}]\setminus\mathcal{B}$, 
	\[\mathbb{P}\{|\bar{z}_i - \mu|\ge s\}\le 2\exp(-n_i\min\{\frac{s}{2\sigma}, \frac{s^2}{2\sigma^2}\})\le 2\exp(-n_{min}\min\{\frac{s}{2\sigma}, \frac{s^2}{2\sigma^2}\}).\]
	Then by union bound, 
	\[\mathbb{P}\{\max_{i\in [\tilde{m}]\setminus\mathcal{B}}\{|\bar{z}_i-\mu|\}\ge s\}\le 2(1-\tilde{\alpha})\tilde{m}\exp(-n_{min}\min\{\frac{s}{2\sigma}, \frac{s^2}{2\sigma^2}\}).\]
	Next, since $z^{i,l}$ is $\sigma$-sub-exponential, then $\bar{z}_i$ is $\frac{\sigma}{\sqrt{n_i}}$-sub-exponential with mean $\mu$. Also notice that all $\bar{z}^i$'s are independent. Since $\max_{i\in [\tilde{m}]\setminus \mathcal{B}}\frac{\sigma}{\sqrt{n_i}} = \frac{\sigma}{\sqrt{n_{min}}}$, by Bernstein's inequality, we have 
	\[\mathbb{P}\{|\frac{1}{(1-\tilde{\alpha})\tilde{m}}\sum_{i\in [\tilde{m}]\setminus\mathcal{B}}\bar{z}_i-\mu|\ge t\}\le 2\exp(-(1-\tilde{\alpha})\tilde{m}\min\{\frac{t^2n_{min}}{2\sigma^2}, \frac{t\sqrt{n_{min}}}{2\sigma}\}).\]
	Now suppose $\tilde{\alpha}\le\beta$, $|\frac{1}{(1-\tilde{\alpha})\tilde{m}}\sum_{i\in [\tilde{m}]\setminus\mathcal{B}}\bar{z}_i-\mu|\le t$ and $\max_{i\in [\tilde{m}]\setminus\mathcal{B}}\{|\bar{z}_i-\mu|\}\le s $. Let $\mathcal{U}$ denote the set of untrimmed elements and $\mathcal{T}$ denote the set of trimmed elements. Let $\mathcal{M} = [\tilde{m}]\setminus\mathcal{B}$ be the set of normal machines. 
	\begin{align*}
		|\operatorname{trmean}_{\beta}\{\bar{z}_i:i\in [\tilde{m}]\}-\mu|&=|\frac{1}{(1-2\beta)\tilde{m}}\sum_{i\in \mathcal{U}}\bar{z}_i-\mu|\\
		&=\frac{1}{(1-2\beta)\tilde{m}}|\sum_{i\in \mathcal{B}\cap\mathcal{U}}(\bar{z}_i-\mu)+\sum_{i\in\mathcal{M}}(\bar{z}_i-\mu)-\sum_{i\in\mathcal{M}\cap\mathcal{T}}(\bar{z}_i-\mu)|\\
		&=\frac{1}{(1-2\beta)\tilde{m}}(|\sum_{i\in \mathcal{B}\cap\mathcal{U}}(\bar{z}_i-\mu)|+|\sum_{i\in\mathcal{M}}(\bar{z}_i-\mu)|+|\sum_{i\in\mathcal{M}\cap\mathcal{T}}(\bar{z}_i-\mu)|)\\
		&=\frac{1}{(1-2\beta)\tilde{m}}(\beta\tilde{m}\max_{i\in\mathcal{M}}|\bar{z}_i-\mu|+(1-\tilde{\alpha})t+2\beta\tilde{m}\max_{i\in\mathcal{M}}|\bar{z}_i-\mu|)\\
		&\le \frac{t+3\beta s}{1-2\beta}
	\end{align*}
\end{proof}
We apply this lemma to $\partial_hf(\theta;z)$. Recall that we assume $\forall h\in [d]$ and $ \forall \theta\in \Theta$, $\partial_hf(\theta;z)$ is $\sigma$-sub-exponential. Then by above lemma, we have $\forall t\ge 0$ and $s\ge 0$, 
\[\mathbb{P}\{|\frac{1}{(1-\tilde{\alpha})\tilde{m}}\sum_{i\in [\tilde{m}]\setminus\mathcal{B}}g_i^h(\theta)-\partial_h F(\theta)|\ge t\}\le 2\exp(-(1-\tilde{\alpha})\tilde{m}\min\{\frac{t^2n_{min}}{2\sigma^2}, \frac{t\sqrt{n_{min}}}{2\sigma}\})\]
and 
\[\mathbb{P}\{\max_{i\in [\tilde{m}]\setminus\mathcal{B}}\{|g_i^h(\theta)-\partial_hF(\theta)|\}\ge s\}\le 2(1-\tilde{\alpha})\tilde{m}\exp(-n_{min}\min\{\frac{s}{2\sigma}, \frac{s^2}{2\sigma^2}\})\]
where $g_i^h(\theta)$ denotes the $h$-th coordinate of $g_i(\theta)$. Therefore, with probability at least 
\[1-2\exp(-(1-\tilde{\alpha})\tilde{m}\min\{\frac{t^2n_{min}}{2\sigma^2}, \frac{t\sqrt{n_{min}}}{2\sigma}\})- 2(1-\tilde{\alpha})\tilde{m}\exp(-n_{min}\min\{\frac{s}{2\sigma}, \frac{s^2}{2\sigma^2}\}),\]
\[|\operatorname{trmean}_{\beta}\{g_i^h(\theta): i\in [\tilde{m}]\}-\partial_hF(\theta)|\le\frac{t+3\beta s}{1-2\beta}.\]
Next, we choose 
\[t = \frac{\sigma}{\sqrt{n_{min}}}\max\{\frac{8d}{\tilde{m}}\log(1+NL_fD), \sqrt{\frac{8d}{\tilde{m}}\log(1+NL_fD)}\}\]
and 
\[s = \sigma \max\{\frac{4}{n_{min}}(d\log(1+NL_fD)+\log \tilde{m}), \sqrt{\frac{4}{n_{min}}(d\log(1+NL_fD)+\log \tilde{m})}\}.\]
The rest of the proof follows exactly the same as proof of claim 5 from \cite{yin2018byzantine}.
\section{Comparison of Error Rates}\label{comparison}
In this section, we show that the error rate in Theorem 2 \cite{ghosh2019robust} is worse than our error rate $\tilde{\mathcal{O}}(\frac{\alpha d}{p\sqrt{n_{min}}}+\frac{d}{\sqrt{pmn_{min}}})$. We use the same notations as \cite{ghosh2019robust}. Define the minimum fraction of cluster size as $\gamma_1 = \min_{j\in [k]}\frac{|S_j^*|}{(1-\alpha)m}$. Then by our notation, $\gamma_1 = \frac{p}{1-\alpha}$. Define a normalized signal-to-noise ratio for $k$ clusters as $r_1 = \frac{\Delta}{\sigma}\sqrt{\frac{\gamma_1}{1+\frac{kd}{(1-\alpha)m}}}$. Suppose after $S$ iterations of clustering algorithm in stage II, we have $k$ clusters $S_1,\dots, S_k$ of ERMs. We denote $\beta_j$ as the fraction of trimmed points and $\alpha_j$ as the fraction of adversarial points in cluster $S_j$. Then the maximum mis-clustering fraction is bounded by $\rho = \Gamma'(\frac{c}{r_1^2}+\sqrt{\frac{5k\log((1-\alpha)m)}{\gamma_1^2(1-\alpha)m}})+\max_{j\in [k]}\frac{\beta_j}{1-\alpha_j}$ for some constant $c$ where $\Gamma' = \max_{j\in [k]}\frac{1-\beta_j}{1-\alpha_j}$. Denote $C_1 = \frac{\Gamma'c\sigma^2}{\Delta^2}, C_2=\frac{\Gamma'c\sigma^2 }{\Delta^2 m}$ and $C_3 = \max_{j\in [k]}\frac{\beta_j}{1-\alpha_j}$. Plug $\rho$ into $\tilde{\alpha}_j$ and we get 
\begin{align}
	\tilde{\alpha}_j&\nonumber=\frac{\rho p_j+\alpha}{p_j+\alpha}\le\frac{\rho p_j+\alpha}{p+\alpha}\\
	&\nonumber=\frac{(C_1\frac{1}{\gamma_1}+C_2\frac{kd}{(1-\alpha)\gamma_1}+\frac{\Gamma'}{\gamma_1}\sqrt{\frac{5k\log((1-\alpha)m)}{(1-\alpha)m}}+C_3)p_j}{p+\alpha}+\frac{\alpha}{p+\alpha}\\
	&=\frac{(\frac{C_1(1-\alpha)}{p}+\frac{C_2kd}{p}+\frac{\Gamma'}{p}\sqrt{\frac{5k(1-\alpha)\log((1-\alpha)m)}{m}}+C_3)p_j}{p+\alpha}+\frac{\alpha}{p+\alpha} \label{equal.1}.
\end{align}
We first consider a special case that all clusters have the same number of clients. In this case, $p = p_j = \frac{1}{k}\forall j\in [k]$. Then (\ref{equal.1}) becomes 
\begin{align*}
	\tilde{\alpha}_j&=\frac{C_1(1-\alpha)}{\frac{1}{k}+\alpha}+\frac{C_2d}{\frac{1}{k}(\frac{1}{k}+\alpha)}+\frac{\Gamma'\sqrt{\frac{5(1-\alpha)\log((1-\alpha)m)}{m}}}{\frac{1}{\sqrt{k}}(\frac{1}{k}+\alpha)}+\frac{C_3\frac{1}{k}}{\frac{1}{k}+\alpha}+\frac{\alpha}{\frac{1}{k}+\alpha}.
\end{align*}
Then $\tilde{\alpha}_j = \mathcal{O}(\frac{d}{p^2}+\frac{\alpha}{p})$. This leads the error rate to have an extra dependence on $\frac{d^2}{p^2\sqrt{n}}$. \\
\\
In general case, $p_j <1\,\forall j\in [k]$. Then 
\begin{align*}
	\tilde{\alpha}_j&\le \frac{C_1(1-\alpha)}{p(p+\alpha)}+\frac{C_2kd}{p(p+\alpha)}+\frac{\Gamma'\sqrt{\frac{5k(1-\alpha)\log((1-\alpha)m)}{m}}}{p(p+\alpha)}+\frac{C_3}{p+\alpha}+\frac{\alpha}{p+\alpha}\\
	&=\mathcal{O}(\frac{d}{p^2}+\frac{\alpha}{p})
\end{align*}
which still gives the error rate an extra dependence on $\frac{d^2}{p^2\sqrt{n}}$.

\end{document}